\DeclareMathOperator{\FG}{FG}
\DeclareMathOperator{\KL}{KL}
\DeclareMathOperator{\LS}{LS}
\DeclareMathOperator{\dc}{dc}
\DeclareMathOperator{\Regret}{Regret}
\newcommand{\FGTSVA}{\texttt{FGTS-VA}}
\title{Variance-Aware Feel-Good Thompson Sampling for Contextual Bandits}
\definecolor{LightCyan}{rgb}{0.4, 0.5, 1}
\author{%
  Xuheng Li\\
  Department of Computer Science\\
  University of California, Los Angeles\\
  California, 90095 \\
  \texttt{xuheng.li@cs.ucla.edu} \\
  % examples of more authors
  \And
  Quanquan Gu \\
  Department of Computer Science\\
  University of California, Los Angeles\\
  California, 90095 \\
  \texttt{qgu@cs.ucla.edu} \\
  % Affiliation \\
  % Address \\
  % \texttt{email} \\
  % \AND
  % Coauthor \\
  % Affiliation \\
  % Address \\
  % \texttt{email} \\
  % \And
  % Coauthor \\
  % Affiliation \\
  % Address \\
  % \texttt{email} \\
  % \And
  % Coauthor \\
  % Affiliation \\
  % Address \\
  % \texttt{email} \\
}
\begin{document}

\maketitle

\begin{abstract}
Variance-dependent regret bounds have received increasing attention in recent studies on contextual bandits. However, most of these studies are focused on upper confidence bound (UCB)-based bandit algorithms, while sampling based bandit algorithms such as Thompson sampling are still understudied. The only exception is the \texttt{LinVDTS} algorithm \citep{xu2023noise}, which is limited to linear reward function and its regret bound is not optimal with respect to the model dimension.
In this paper, we present \FGTSVA, a variance-aware Thompson Sampling algorithm for contextual bandits with general reward function with optimal regret bound.
At the core of our analysis is an extension of the decoupling coefficient, a technique commonly used in the analysis of Feel-good Thompson sampling (FGTS) that reflects the complexity of the model space.
With the new decoupling coefficient denoted by $\dc$, \FGTSVA~achieves the regret of $\tilde{\mathcal{O}}\big(\sqrt{\dc\cdot\log|\cF|\sum_{t=1}^T\sigma_t^2}+\dc\big)$, where $|\cF|$ is the size of the model space, $T$ is the total number of rounds, and $\sigma_t^2$ is the subgaussian norm of the noise (e.g., variance when the noise is  Gaussian) at round $t$. In the setting of contextual linear bandits, the regret bound of \FGTSVA~matches that of UCB-based algorithms using weighted linear regression \citep{zhou2022computationally}.
\end{abstract}

\section{Introduction}

The contextual bandit \citep{langford2007epoch} is a pivotal setting in interactive decision making, and is an important generalization of the multi-armed bandit that incorporates context-dependent reward functions.
However, the standard contextual bandit setting is unable to account for heterogeneous and context-dependent noise of the observed rewards, which can have significant impacts on the performance of algorithms \citep{auer2002using}.
Additionally, algorithms designed for standard contextual bandits \citep{abbasi2011improved, chu2011contextual} are usually incompatible with potentially benign environments, despite achieving the minimax regret bound in the worst case.
For example, these algorithms have a worst-case regret bound of $\tilde\cO(d\sqrt T)$ for linear contextual bandits, where $T$ is the number of steps; however, if the reward is deterministic, then the algorithm based on simple exploration can achieve the regret of $\tilde\cO(d)$.
To fill this gap, a number of approaches \citep{zhou2021nearly, zhang2021improved, zhou2022computationally, kim2022improved, xu2023noise, zhao2023variance, jia2024does} have been developed to account for the heterogeneous magnitudes of the noise, and regret bounds are established depending on $\sigma_t^2$, i.e., the noise variance in step $t$.
Most of these approaches are based on the upper confidence bound (UCB).
Notably, \citet{zhou2022computationally} and \citet{zhao2023variance} established the nearly optimal regret bound of $\tilde\cO\big(d\sqrt{\sum_{t=1}^T\sigma_t^2}+d\big)$ in linear bandits, which degenerates to $\tilde\cO(d)$ in the deterministic case, under the settings where $\sigma_t^2$ are known and agnostic to the agent, respectively.

Thompson sampling (TS) \citep{thompson1933likelihood} is another technique that facilitates exploration of the action space apart from UCB-based methods.
In TS-based algorithms, an estimation of the reward function is sampled from the posterior distribution instead of being deterministically constructed.
TS-based algorithms have displayed better efficiency than UCB-based algorithms empirically \citep{chapelle2011empirical, osband2017posterior}, and have been extensively studied for both multi-armed bandits \citep{agrawal2012analysis, kaufmann2012thompson, agrawal2017near, jin2021mots} and linear bandits \citep{agrawal2013thompson}.
However, a minimax-optimal frequentist regret bound of standard TS has been lacking, and \citet{zhang2022feel} constructed an instance such that standard TS is suboptimal.
To resolve this issue, \citet{zhang2022feel} proposed a new variant of TS called the \textbf{Feel-Good Thompson sampling (FGTS)}, and theoretically justifies that the frequentist regret of FGTS is $\tilde\cO(d\sqrt T)$, which is minimax optimal and similar to UCB-based algorithms (e.g., OFUL \citep{abbasi2011improved} and LinUCB \citep{li2010contextual}).

Despite the success the success of TS-based algorithms in the vanilla contextual bandit setting, there is a scarcity of results concerning variance-aware contextual bandits.
A notable exception is \citet{xu2023noise}, which proposed a variant of TS with weighted ridge regression.
However, this result is restricted to linear contextual bandits, and the regret is suboptimal in the model dimension $d$, which is an issue shared by TS-based algorithms designed for vanilla contextual bandits \citep{agrawal2013thompson, abeille2017linear}.
Therefore, the following open question arises:
\begin{quote}
\it Is it possible to design a FGTS-based algorithm for contextual bandits whose regret is both optimal in $d$ and variance-dependent, similar to UCB-based algorithms?
\end{quote}

In this paper, we answer this question affirmatively with the first variance-aware algorithm based on FGTS and new analysis techniques.
We summarize our contributions as follows, with a comparison of our algorithm against related algorithms shown in Table \ref{tab:comparison}:

\begin{itemize}[leftmargin=0.5cm]
\item[1.] We propose an FGTS-based algorithm called \FGTSVA~for the setting of variance-dependent contextual bandits, which is applicable to the general reward function class. Compared with the standard FGTS algorithm in \citet{zhang2022feel}, the posterior distribution of \FGTSVA~adopts not only variance-dependent weights applied to the log-likelihoods but also a new \textbf{feel-good exploration term}. When reduced to standard contextual bandits, \FGTSVA~is the first FGTS-based algorithm that does not require knowledge of the horizon $T$.
\item[2.] In the analysis of \FGTSVA, we propose the \textbf{generalized decoupling coefficient}, which is a novel extension of the standard decoupling coefficient commonly used in the analysis of FGTS. We relate the generalized decoupling coefficient to other complexity measures by showing that (i) the generalized decoupling coefficient is $\tilde\cO(d)$ for linear contextual bandits, and that (ii) it is bounded by the generalized Eluder dimension for the general reward function class.
\item[3.] Equipped with the generalized decoupling coefficient (denoted as $\dc$), we show that the regret bound of \FGTSVA~is $\cO\big(\sqrt{(1+\sum_{t=1}^T\sigma_t^2)\dc\log|\cF|}+\dc\big)$ in expectation, where $|\cF|$ is the cardinality of the function class. For linear contextual bandits, \FGTSVA~enjoys the nearly optimal regret of $\tilde\cO\big(d\sqrt{\sum_{t=1}^T\sigma_t^2}+d\big)$, similar to UCB-based algorithms \citep{zhou2022computationally, zhao2023variance}. When restricted to the deterministic case, the regret of \FGTSVA~is $\tilde\cO(\dc)$, matching the lower bound given by \citet{xu2023noise}.
\end{itemize}

\noindent\textbf{Notation.}
We use $\ind[\cdot]$ to denote the indicator function.
We use $\KL(\cdot||\cdot)$ to denote the KL-divergence of two distributions.
We use standard asymptotic notations $\cO(\cdot)$, $\Omega(\cdot)$, and $\Theta(\cdot)$, with $\tilde\cO(\cdot)$, $\tilde\Omega(\cdot)$, and $\tilde\Theta(\cdot)$ hiding logarithmic factors;
$f(\cdot)\lesssim g(\cdot)$ means $f(\cdot)=\cO(g(\cdot))$.
We use non-boldface letters to denote scalars, boldface lower-case letters to denote vectors, and boldface upper-case letters to denote matrices.
We use $\langle\cdot, \cdot\rangle$ to denote the inner product, i.e., for vectors $\ab$ and $\bbb$, define $\langle\ab, \bbb\rangle=\ab^\top\bbb$; for matrices $\Ab$ and $\Bb$, define $\langle\Ab, \Bb\rangle=\tr(\Ab\Bb^\top)$.
For a vector $\vb$ and a positive semi-definite (PSD) matrix $\Mb$, let $\|\vb\|_{\Mb}=\sqrt{\vb^\top\Mb\vb}$.
For a positive integer $n$, let $[n]$ denote the set of $\{1, 2, \dots, n\}$.

\begin{table}[ht]
    \centering\small
    \caption{Comparison of variance-aware algorithms for bandits. We compare the regret under the setting of both stochastic and deterministic linear bandits, where $d$ is the model dimension, $T$ is the number of steps, and $\Lambda=\sum_{t=1}^T\sigma_t^2$ is the sum of variances. The last column stands for whether the variance is revealed to the learning agent at step $t$.}
    \begin{tabular}{ccccc}
    \toprule
    Algorithm & Technique & Regret (General) & Regret (Deterministic) & $\sigma_t^2$ \\
    \midrule
    \makecell{Weighted OFUL+\\\tiny{\citep{zhou2022computationally}}} & UCB & $\tilde\cO(d\sqrt{\Lambda}+d)$ & $\tilde\cO(d)$ & Known\\
    \makecell{\texttt{SAVE}\\\tiny{\citep{zhao2023variance}}} & UCB & $\tilde\cO(d\sqrt\Lambda+d)$ & $\tilde\cO(d)$ & Unknown\\
    \makecell{\texttt{LinVDTS}\\\tiny{\citep{xu2023noise}}} & TS & $\tilde\cO(d^{1.5}\sqrt\Lambda+d^{1.5})$ & $\tilde\cO(d^{1.5})$ & Unknown\\
    \makecell{\texttt{FGTS}\\\tiny{\citep{zhang2022feel}}} & TS & $\tilde\cO(d\sqrt T)$ & $\tilde\cO(d\sqrt T)$ & NA\\
    \rowcolor{LightCyan!20!} \makecell{~~~~~~~~~\FGTSVA~~~~~~~~~\\\tiny{(This work)}} & TS & $\tilde\cO(d\sqrt\Lambda+d)$ & $\tilde\cO(d)$ & Known\\
    \bottomrule
    \end{tabular}
    \label{tab:comparison}
\end{table}

\section{Related Work}

\noindent\textbf{Variance-aware algorithms.}
\citet{audibert2009exploration} proposed the first algorithm that utilized the variance information through empirical estimates of the variance, with a line of works based on similar techniques for various settings \citep{hazan2011better,  wei2018more, ito2021parameter, ito2023best}.
\citet{mukherjee2018efficient} used the variance estimates to characterize confidence intervals and to perform arm elimination.

A recent line of works study variance-dependent algorithms for bandits with function approximation.
For the case of known variances, a line of works have utilized the variances in weighted ridge regression in linear bandits \citep{zhou2021nearly, zhou2022computationally}.
For the case of unknown variances, \citet{zhang2021improved} and \citet{kim2022improved} constructed variance-dependent confidence sets, and \citet{zhao2023variance} designed a SupLin-type algorithm and proposed the idea of classifying samples into different variance levels.
\citet{di2023variance} used the idea of \citet{zhao2023variance} to develop a variance-aware algorithm for dueling bandits.
Notably, \citet{zhou2022computationally} and \citet{zhao2023variance} managed to derive the nearly optimal regret bounds of $\tilde\cO\big(d\sqrt{\sum_{t=1}^T\sigma_t^2}+d\big)$ for the two cases, respectively.
For contextual bandits with the general function class, \citet{wei2020taking} focused on the case where the action space is small and derived the regret bound related to the total estimation error.
A recent work \citep{jia2024does} studied two cases that are referred to the weak adversary and the strong adversary, depending on whether the actions of the agent can affect variances. This work managed to establish a regret bound of $\tilde\cO\big(d_{\mathrm{elu}}\sqrt{\sum_{t=1}^T\sigma_t^2}+d_{\mathrm{elu}}\big)$ for the strong adversary, where $d_{\mathrm{elu}}$ stands for the Eluder dimension.
However, the analysis of the weak adversary setting, which is more closely related to the setting of our work, is restricted to the case of finite action space.

Another line of works derived second-order bounds for Markov decision processes. \citet{wang2024more} derived the first second-order bound for distributional RL.
Built on the pivotal triangle inequality in \citet{wang2024more}, \citet{wang2024model} proved that OMLE algorithm \citep{liu2023optimistic} without weighted regression enjoys the second-order regret bound.

\noindent\textbf{Feel-Good Thompson sampling (FGTS).}
\citet{zhang2021improved} first proposed FGTS, and achieved the minimax-optimal regret bound for linear contextual bandits by virtue of the feel-good exploration term in the posterior distribution.
\citet{fan2023power} extended the technique to function approximation of the policy space and applied the algorithm to a number of variants of linear contextual bandits.
A recent work by \citet{li2024feel} derived an algorithm for contextual dueling bandits based on FGTS that is efficient both theoretically and empirically.
By replacing the feel-good exploration term with the value function in the first step, \citet{agarwal2022model, dann2021provably} applied FGTS to model-based RL and model-free RL, respectively, with the technique of \citet{agarwal2022model} more similar to \citet{zhang2022feel}, and the technique of \citet{dann2021provably} motivating of our algorithm.

\noindent\textbf{Variance-aware Thompson sampling algorithms.}
\citet{saha2023only} proposed a variance-aware TS-based algorithm for multi-armed bandits.
\citet{xu2023noise} proposed \texttt{LinVDTS}, which is the only algorithm for linear bandits based on Thompson sampling.
The posterior distribution in \texttt{LinVDTS} is Gaussian with the mean estimated with the weighted ridge regression in \citet{zhou2022computationally}.
However, the regret bound of \texttt{LinVDTS} is $\tilde\cO(d^{1.5}\sqrt{\sum_{t=1}^T}\sigma_t^2+d^{1.5})$, which is suboptimal in $d$, similar to the regret bound of $\tilde\cO(d^{1.5}\sqrt{T})$ of standard TS algorithms for linear bandits \citep{agrawal2013thompson, abeille2017linear}.

\section{Preliminaries}

\subsection{Contextual Bandits}

We study the setting of contextual bandits \citep{langford2007epoch} which is an extension to multi-armed bandits by adding the notion of the context set $\cX$ and allowing the action set to be a context-dependent subset of the whole action set $\cA$.
In step $t$ over a total of $T$ steps, the agent first receives the context $x_t$ and the action set $\cA_t\subset\cA$.
The agent then selects an action $a_t\in\cA_t$ and receives randomized reward $r_t=f_*(x_t, a_t)+\epsilon_t$, where $f_*:\cX\times\cA_t\to[0, 1]$ is the ground truth reward function which is unknown to the agent, and $\epsilon_t$ is the zero-mean noise.
We use $\cF_t$ to denote the filtration generated by $\{(x_t, a_t, f_t, r_t)\}_{t\in[T]}$, i.e., all the randomness up to step $t$ (including the randomness if the reward function $f_t$ is sampled), and $\cG_t$ to denote the filtration that includes all the randomness of $\cF_t$ but $r_t$. The goal of the agent is to minimize the total regret defined as
\begin{align*}
\Regret(T)=\sum_{t=1}^T[f_*(x_t, a_t^*)-f_*(x_t, a_t^*)],
\end{align*}
where $a_t^*=\argmax_{a\in\cA_t}f_*(x_t, a)$ is the optimal action in step $t$.

\noindent\textbf{Subgaussian noise.}
Following previous works on FGTS for contextual bandits \citep{zhang2022feel,fan2023power}, we assume that the noise is subgaussian, and denote the subgaussian norm of $\epsilon_t$ as $\sigma_t^2$ which is formalized by the following assumption:
\begin{assumption}\label{assumption:subgaussian_noise}
The noise $\epsilon_t$ is $\sigma_t^2$-subgaussian conditioned on the history, i.e., for any $\lambda$, the moment-generating function of $\epsilon_t$ conditioned on $\cG_t$ satisfies
\begin{align*}
\log\EE[\exp(\lambda\epsilon_t)|\cG_t]\le\sigma_t^2\lambda^2/8.
\end{align*}
\end{assumption}
We denote $\Lambda\coloneqq\sum_{t=1}^T\sigma_t^2$. We assume that $\sigma_t^2$ is revealed to the agent at the beginning of step $t$, which is referred to as the setting of ``weak adversary with variance revealing'' in \citet{jia2024does}. Compared with the assumption used in UCB-based variance-aware algorithms \citep{zhou2021nearly, kim2022improved, zhou2022computationally, zhao2023variance, xu2023noise} where $\epsilon_t$ is bounded with variance $\EE[\epsilon_t^2|\cG_t]$, Assumption \ref{assumption:subgaussian_noise} subsumes that $\EE[\epsilon_t^2|\cG_t]=\cO(\sigma_t^2)$ (but not the boundedness of $\epsilon_t$) because
\begin{align*}
\EE[\epsilon_t^2|\cG_t]=\lim_{\lambda\to0}\frac{\EE[\exp(\lambda\epsilon_t)|\cG_t]-1}{\lambda^2/2}\le\lim_{\lambda\to0}\frac{\exp(\sigma_t^2\lambda^2/8)-1}{\lambda^2/2}=\frac{\sigma_t^2}{4}.
\end{align*}

\noindent\textbf{Reward function class.}
The agent may estimate the reward function that belongs to a function class $\cF$. We focus on the realizable setting where the ground truth reward function satisfies $f_*\in\cF$. An example of the reward function class is the family of linear functions denoted by $\cF_{d}^{\mathrm{lin}}=\{f_{\btheta}: \btheta\in\Theta\}$, where $f_{\btheta}(x, a)=\langle\btheta, \bphi(x, a)\rangle\in[0, 1]$, $\Theta\subset\{\wb\in\RR^d:\|\wb\|_2\le1\}$ is the parameter space, and $\bphi:\cX\times\cA\to\RR^d$ is a feature mapping.
We also consider the function class with finite generalized Eluder dimension \citep{agarwal2023vo}, defined as follows:
\begin{definition}
Let $Z=\{z_t\}_{t\in[T]}$ be a sequence of context-action pairs, and $\bbeta=\{\beta_t\}_{t\in[T]}$ be positive numbers. The generalized Eluder dimension of the function class $\cF$ is given by $\dim_{\lambda, \epsilon, T}(\cF)\coloneqq\sup_{Z, \bbeta\le\epsilon}\dim_\lambda(\cF, Z, \bbeta)$, where
\begin{align*}
&\dim_{\lambda}(\cF, Z, \bbeta)=\sum_{t=1}^T\min\{1, \beta_t\cD^2_{\lambda, \cF}(z_t, z_{[t-1]}, \bbeta_{[t-1]})\},\\
&\cD^2_{\lambda, \cF}(z, z_{[t-1]}, \bbeta_{[t-1]})=\sup_{f_1, f_2\in\cF}\frac{(f_1(z)-f_2(z))^2}{\lambda+\sum_{s=1}^{t-1}\beta_s(f_1(z_s)-f_2(z_s))^2}.
\end{align*}
\end{definition}

\subsection{Feel-Good Thompson Sampling}

In Thompson Sampling \citep{thompson1933likelihood}, instead of deterministically estimating the reward function, an estimation of the reward function $f_t$ is sampled in step $t$ from the posterior distribution defined as
\begin{align}
p_t^{\mathrm{TS}}(f|S_{t-1})\propto p_0(f)\exp\bigg(-\sum_{s=1}^{t-1}L_s(f, x_s, a_s, r_s)\bigg),\label{eq:vanilla_TS}
\end{align}
where $p_0(f)$ is a prior distribution, and $L_s$ is the log-likelihood often set as the square loss $L_s(f, x_s, a_s, r_s)=\eta(r_s-f(x_s, a_s))^2$ with $\eta$ being a hyperparameter. An action is then selected from $\cA_t$ to maximize $f_t(x_t, \cdot)$. However, \citet{zhang2022feel} showed that the frequentist regret of vanilla Thompson sampling is suboptimal in the worst case, and proposed Feel-Good Thompson sampling (FGTS) to fill this gap. At the core of FGTS is a modification to the posterior distribution called the \textbf{feel-good exploration} term of the form $\max_{a\in\cA_t}f(x_t, a)$. Two types of feel-good exploration terms have been developed:
\begin{itemize}[leftmargin=1.5cm]
    \item[Type A.] Augment the log-likelihood of each step with the feel-good exploration term \citep{zhang2022feel, agarwal2022model}, i.e.,
\end{itemize}
    \begin{align}\label{eq:posterior_FGTS_A}
    p_t^{\mathrm{FGTS-A}}(f|S_{t-1})\propto p_0(f)\exp\bigg(\sum_{s=1}^{t-1}[-\eta(r_s-f(x_s, a_s))^2+\textcolor{red}{\lambda\max_{a\in\cA_s}f(x_s, a)}]\bigg).
    \end{align}
\begin{itemize}[leftmargin=1.5cm]
    \item[Type B.] Only add the feel-good exploration term of the current step \citep{dann2021provably}, i.e.,
\end{itemize}
    \begin{align}\label{eq:posterior_FGTS_B}
    p_t^{\mathrm{FGTS-B}}(f|S_{t-1})\propto p_0(f)\exp\bigg(-\sum_{s=1}^{t-1}\eta(r_s-f(x_s, a_s))^2+\textcolor{red}{\lambda\max_{a\in\cA_t}f(x_t, a)}\bigg).
    \end{align}
Different from \citep{zhang2022feel} that applied Type A of FGTS to contextual bandits, our algorithm is more closely related to Type B. We will present details of our posterior distribution in Section \ref{section:algorithm}, show the effectiveness of Type B applied to (variance-aware) contextual bandits in Section \ref{section:main_results} (although it is originally developed for model-free RL), and explain the reason why Type B is preferred for variance-aware FGTS in Section \ref{section:proof_sketch}.

\section{Variance Aware Feel-Good Thompson Sampling}\label{section:algorithm}

In this section, we sketch our algorithm \FGTSVA~in Algorithm \ref{alg:FGTSVA} for contextual bandits with heterogeneous noise levels. \FGTSVA~adopts the general framework of FGTS algorithms where an estimation of the reward function $f_t$ is sampled from the posterior distribution augmented with the feel-good exploration term, and then the action $a_t$ is selected to maximize the reward function.

\begin{algorithm}[ht]
\caption{\FGTSVA}\label{alg:FGTSVA}
\begin{algorithmic}[1]
\STATE Given hyperparameter $\alpha$ and $\gamma$. Initialize $S_0=\varnothing$.
\FOR{$t=1$ \TO $T$}
\STATE Receive context $x_t$.
\STATE Set parameters $\{\eta_s\}_{s\in[t-1]}$ and $\lambda_t$ according to \eqref{eq:parameters}.
\STATE Sample $f_t\sim p_t(\cdot|S_{t-1})$, with the posterior distribution $p_t(f|S_{t-1})$ defined in \eqref{eq:posterior}.
\STATE Select $a_t=\argmax_{a\in\cA_t}f_t(x_t, a)$.
\STATE Observe reward $r_t$; update $S_t=S_{t-1}\cup\{(x_t, a_t, r_t)\}$.
\ENDFOR
\end{algorithmic}
\end{algorithm}

\noindent\textbf{Posterior distribution.}
Motivated by \eqref{eq:posterior_FGTS_B}, the posterior distribution is designed as
\begin{align}\label{eq:posterior}
p_t(f|S_{t-1})\propto p_0(f)\exp\bigg(-\sum_{s=1}^{t-1}\textcolor{red}{\eta_s}(r_s-f(x_s, a_s))^2+\textcolor{red}{\lambda_t\max_{a\in\cA_t}f(x_t, a)}\bigg),
\end{align}
where parameters $\eta_s$ and $\lambda_t$ are chosen as
\begin{align}\label{eq:parameters}
\eta_s=\bar\sigma_s^{-2},\quad\lambda_t=c\sqrt{\Lambda_t}/\bar\sigma_t^2,\quad\text{where}~\bar\sigma_t=\max\{\sigma_t, \alpha\},\quad\Lambda_t=\sum_{s=1}^t\bar\sigma_s^2.
\end{align}
with $\alpha$ and $c$ being hyperparameters. We explain the design of $\eta_t$ and $\lambda_t$ as follows:
\begin{itemize}[leftmargin=*]
\item The constructions of $\eta_s$ and $\bar\sigma_s$ are similar to \citet{zhou2021nearly}, with $\alpha>0$ being a hyperparameter that controls $\eta_s$ in case of vanishing $\sigma_s^2$ and can be set as $\cO(1/\mathrm{poly}(T))$. The coefficient $\eta_s$ functions as a preconditioner that balances the squared error $(r_s-f(x_s, a_s))^2$ across different steps. Compared with our algorithm, \citet{zhou2022computationally} and \citet{xu2023noise} set $\bar\sigma_t$ as the maximum of not only $\sigma_t$ and $\alpha$, but also a quantity called the ``uncertainty'' that depends on the action $a_t$. However, this approach is prohibitive in our algorithm due to the occurrence of $\bar\sigma_t$ in $\lambda_t$ which is required before the choice of $a_t$.
\item The parameter $\lambda_t$ controls the magnitude of the feel-good exploration term. $\lambda_t$ scales with $\bar\sigma_t^{-2}$ because intuitively, if the reward of the current step is small, then exploration should be encouraged due to the more informative reward feedback. Setting $\lambda_t=c\sqrt{\Lambda_t}/\bar\sigma_t^2$ achieves the same regret as setting $\lambda_t=c\sqrt{\Lambda}/\bar\sigma_t^2$ (explained in Section \ref{section:proof_sketch}), and avoids requiring the total variance $\Lambda$ at initialization.
\end{itemize}

\section{Main Results}\label{section:main_results}

We first introduce the generalized decoupling coefficient, an extension of the standard decoupling coefficient in \citet{dann2021provably} which is a crucial tool in the analysis of algorithms based on FGTS:
\begin{definition}[Generalized decoupling coefficient]
Let $Z=\{z_t\}_{t\in[T]}$ be a sequence of context-action pairs, and $\bbeta=\{\beta_t\}_{t\in[T]}$ be positive numbers. The decoupling coefficient is defined as $\dc_{\lambda, \epsilon, T}(\cF)\coloneqq\sup_{Z, \bbeta\le\epsilon}\dc_\lambda(\cF, Z, \bbeta)$, where $\dc_\lambda(\cF, Z, \bbeta)$ is the smallest number that satisfies
\begin{align}\label{eq:def_gen_DC}
\sum_{t=1}^T(f_t(z_t)-f_*(z_t))\le\sum_{t=1}^T\frac{\gamma}{\beta_t}\sum_{s=1}^{t-1}\beta_s(f_t(z_s)-f_*(z_s))^2+\textcolor{blue}{\gamma\lambda\sum_{t=1}^T\frac1{\beta_t}}+\textcolor{blue}{\bigg(1+\frac1{4\gamma}\bigg)}\dc_\lambda(\cF, Z, \bbeta),
\end{align}
for any sequence $\{f_t\}_{t\in[T]}$ where $f_t\in\cF$ and any $\gamma>0$.
\end{definition}
The generalized decoupling coefficient is used to analyze $f_t(x_t, a_t)-f_*(x_t, a_t)$ where $a_t$ and $f_t$ are dependent random variables because $a_t$ is selected to maximize $f_t(x_t, \cdot)$ in the algorithm design. It relates the error of the current step to the error of historic steps $f_t(z_s)-f_*(z_s)$, and generalizes the standard decoupling coefficient \citep{dann2021provably} by introducing undermined parameters $\beta_t$. We discuss the relationships of the generalized decoupling coefficient with the standard decoupling coefficient and with other complexity measures as follows:

\noindent\textbf{Relationship with standard decoupling coefficient.}
By fixing $\beta_t=1$, the generalized decoupling coefficient is closely related to the standard decoupling coefficient in \citet{dann2021provably}: For any $\gamma\le1$, the standard decoupling coefficient is defined as $\dc'_T(\cF)=\sup_{Z}\dc'(\cF, Z)$, where $\dc'(\cF, Z)$ is the smallest number that satisfies
\begin{align}\label{eq:def_DC}
\sum_{t=1}^T(f_t(z_t)-f_*(z_t))\le\gamma\sum_{t=1}^T\sum_{s=1}^T(f_t(z_s)-f_*(z_s))^2+\frac{\dc'(\cF, Z)}{4\gamma}.
\end{align}
Comparing \eqref{eq:def_gen_DC} and \eqref{eq:def_DC}, two additional terms occur in the definition of the generalized decoupling coefficient: (i) The term $\gamma\lambda\sum_{t=1}^T\beta_t^{-1}$ is a technical artifact and can be shaved by choosing small $\lambda$; (ii) The term $\dc_\lambda(\cF, Z, \bbeta)$ is shadowed by $\dc_{\lambda}(\cF, Z, \bbeta)/(4\gamma)$ when $\gamma\le1$ as is the case in \eqref{eq:def_DC}. However, as more flexible choices of both $\bbeta$ and $\gamma$ (possibly $>1$) are required, this additional term is unavoidable in the variance-aware setting. The occurrence of this term is due to the possibly large values of $\beta_t$, and will be explained in detail in Appendix \ref{section:generalized_dc} (see \eqref{eq:regret_ind_decomp}).

\noindent\textbf{Relationship with other complexity measures.}
For the linear reward function class, we can upper bound the generalized reward function as follows:
\begin{proposition}\label{prop:linear_GDC}
For the linear function class $\cF_d^{\mathrm{lin}}$, the generalized decoupling coefficient satisfies
\begin{align*}
\dc_{\lambda, \epsilon, T}(\cF_d^{\mathrm{lin}})\le2d\log(1+(\epsilon T)/(d\lambda)).
\end{align*}
\end{proposition}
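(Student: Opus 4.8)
The plan is to exploit the linear structure to reduce the defining inequality \eqref{eq:def_gen_DC} to an elliptical-potential estimate. Write $f_t = f_{\btheta_t}$, $f_* = f_{\btheta_*}$, set $\wb_t := \btheta_t - \btheta_*$ and $\bphi_s := \bphi(z_s)$, so that $f_t(z_s) - f_*(z_s) = \langle\wb_t, \bphi_s\rangle$. Introduce the weighted, regularized design matrix $\Mb_t := \lambda\Ib + \sum_{s=1}^{t-1}\beta_s\bphi_s\bphi_s^\top \succeq \lambda\Ib$. The workhorse is a per-step estimate combining Cauchy--Schwarz with a weighted AM--GM: $\langle\wb_t, \bphi_t\rangle \le \|\wb_t\|_{\Mb_t}\|\bphi_t\|_{\Mb_t^{-1}} \le \tfrac{\gamma}{\beta_t}\|\wb_t\|_{\Mb_t}^2 + \tfrac{\beta_t}{4\gamma}\|\bphi_t\|_{\Mb_t^{-1}}^2$. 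Expanding $\|\wb_t\|_{\Mb_t}^2 = \sum_{s<t}\beta_s\langle\wb_t,\bphi_s\rangle^2 + \lambda\|\wb_t\|_2^2$ and summing over $t$, the first piece reproduces the quadratic term of \eqref{eq:def_gen_DC} exactly, the term $\sum_t \tfrac{\gamma\lambda}{\beta_t}\|\wb_t\|_2^2$ is absorbed into $\gamma\lambda\sum_t\beta_t^{-1}$ by boundedness of the parameter space, and $\tfrac{1}{4\gamma}\sum_t\beta_t\|\bphi_t\|_{\Mb_t^{-1}}^2$ remains as the candidate $\dc$ term.

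The crux, and the step I expect to be the main obstacle, is that this leftover potential is uncapped: since $\Mb_t \succeq \lambda\Ib$ only gives $\|\bphi_t\|_{\Mb_t^{-1}}^2 \le 1/\lambda$, the naive choice $\dc_\lambda = \sum_t \beta_t\|\bphi_t\|_{\Mb_t^{-1}}^2$ is too weak to produce a $d\log(\cdot)$ bound once $\epsilon/\lambda$ is large. To recover the cap I would additionally invoke the range constraint $f_t, f_* \in [0,1]$, which yields $\langle\wb_t, \bphi_t\rangle \le 1$. Maximizing the per-step residual $\min\{1, u\,\|\bphi_t\|_{\Mb_t^{-1}}\} - \tfrac{\gamma}{\beta_t}u^2$ over $u = \|\wb_t\|_{\Mb_t}$ and splitting into cases (the tight case reduces to $(y-2)^2 \ge 0$) shows each residual is at most $\min\{1, \tfrac{1}{4\gamma}m_t\}$ with $m_t := \beta_t\|\bphi_t\|_{\Mb_t^{-1}}^2$. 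A term-wise comparison then gives $\min\{1, \tfrac{m_t}{4\gamma}\} \le (1 + \tfrac{1}{4\gamma})\min\{1, m_t\}$ for every $\gamma > 0$, which is exactly the $\gamma$-uniform slack that the factor $(1+\tfrac{1}{4\gamma})$ in \eqref{eq:def_gen_DC} was built to accommodate. Hence $\dc_\lambda(\cF_d^{\mathrm{lin}}, Z, \bbeta) \le \sum_{t=1}^T \min\{1, \beta_t\|\bphi_t\|_{\Mb_t^{-1}}^2\}$, which is precisely the generalized Eluder quantity $\dim_\lambda(\cF_d^{\mathrm{lin}}, Z, \bbeta)$ for the linear class.

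It remains to bound this capped sum by the log-determinant potential. Using $\min\{1, x\} \le 2\log(1+x)$ together with the determinant recursion $\det\Mb_{t+1} = \det\Mb_t\,(1 + \beta_t\|\bphi_t\|_{\Mb_t^{-1}}^2)$ telescopes the sum into $2\big(\log\det\Mb_{T+1} - \log\det(\lambda\Ib)\big)$. The trace--AM--GM bound $\log\det\Mb_{T+1} \le d\log\big(\tfrac1d\tr\Mb_{T+1}\big) = d\log\big(\lambda + \tfrac1d\sum_t\beta_t\|\bphi_t\|_2^2\big)$, combined with $\beta_t \le \epsilon$ and the feature normalization $\|\bphi_t\|_2 \le 1$, gives $\sum_t\min\{1, m_t\} \le 2d\log(1 + \epsilon T/(d\lambda))$. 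Since all estimates are uniform in $Z$ and $\bbeta \le \epsilon$, taking the supremum yields $\dc_{\lambda,\epsilon,T}(\cF_d^{\mathrm{lin}}) \le 2d\log(1 + \epsilon T/(d\lambda))$, as claimed.
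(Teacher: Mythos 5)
Your proof is correct and takes essentially the same route as the paper: Cauchy--Schwarz in the weighted design norm $\Mb_t=\lambda\Ib+\sum_{s<t}\beta_s\bphi_s\bphi_s^\top$, AM--GM with weight $\gamma/\beta_t$, a cap at $1$ from the bounded range of the rewards, and the elliptical potential (log-determinant) lemma, yielding the same $\big(1+\tfrac1{4\gamma}\big)\cdot 2d\log(1+\epsilon T/(d\lambda))$ structure. The only difference is organizational: the paper caps the leftover term by splitting the sum with the indicator $\ind\big[\beta_t\|\bphi_t\|_{\Mb_t^{-1}}^2\le1\big]$ into two pieces, whereas you perform a per-step optimization over $\|\wb_t\|_{\Mb_t}$ (the $(y-2)^2\ge0$ case analysis) followed by a termwise comparison with $\min\{1,\beta_t\|\bphi_t\|_{\Mb_t^{-1}}^2\}$ --- an equivalent way of reaching the same bound.
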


For the general reward function class, the generalized decoupling coefficient can be bounded by the generalized Eluder dimension \citep{agarwal2023vo} as follows:
\begin{proposition}\label{prop:general_GDC}
For a reward function class with finite generalized Eluder dimension, the generalized decoupling coefficient satisfies $\dc_{\lambda, \epsilon, T}(\cF)\le\dim_{\lambda, \epsilon, T}(\cF)$.
\end{proposition}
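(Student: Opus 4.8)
The plan is to show that the value $\dim_\lambda(\cF, Z, \bbeta)$ itself satisfies the defining inequality \eqref{eq:def_gen_DC} for every sequence $\{f_t\}_{t\in[T]}$ with $f_t\in\cF$ and every $\gamma>0$. Since $\dc_\lambda(\cF, Z, \bbeta)$ is by definition the \emph{smallest} number with this property, this immediately gives $\dc_\lambda(\cF, Z, \bbeta)\le\dim_\lambda(\cF, Z, \bbeta)$, and taking the supremum over $Z$ and $\bbeta\le\epsilon$ yields the claim. Thus everything reduces to a per-round upper bound on $f_t(z_t)-f_*(z_t)$.

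First I would extract the key inequality from the definition of $\cD^2$: abbreviating $w_t\coloneqq\beta_t\cD^2_{\lambda,\cF}(z_t, z_{[t-1]}, \bbeta_{[t-1]})$ and plugging the admissible pair $(f_1,f_2)=(f_t,f_*)$ into the supremum defining $\cD^2$, I obtain
\[
(f_t(z_t)-f_*(z_t))^2\le\frac{w_t}{\beta_t}\Big(\lambda+\sum_{s=1}^{t-1}\beta_s(f_t(z_s)-f_*(z_s))^2\Big).
\]
I would also record the trivial bound $f_t(z_t)-f_*(z_t)\le1$, valid because every function in $\cF$ is $[0,1]$-valued.

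The core is a two-case pointwise estimate engineered to reproduce the factor $1+\tfrac1{4\gamma}$. When $w_t\ge1$, I bound $f_t(z_t)-f_*(z_t)\le1=\min\{1,w_t\}$. When $w_t<1$, I take square roots in the displayed inequality and apply the elementary bound $\sqrt{uv}\le\tfrac{u}{4\gamma}+\gamma v$ with $u=w_t$ and $v=\beta_t^{-1}\big(\lambda+\sum_{s<t}\beta_s(f_t(z_s)-f_*(z_s))^2\big)$, and then use $w_t=\min\{1,w_t\}$, giving
\[
f_t(z_t)-f_*(z_t)\le\frac{\min\{1,w_t\}}{4\gamma}+\frac{\gamma\lambda}{\beta_t}+\frac{\gamma}{\beta_t}\sum_{s=1}^{t-1}\beta_s(f_t(z_s)-f_*(z_s))^2.
\]
Both right-hand sides are dominated by $\big(1+\tfrac1{4\gamma}\big)\min\{1,w_t\}+\tfrac{\gamma\lambda}{\beta_t}+\tfrac{\gamma}{\beta_t}\sum_{s<t}\beta_s(f_t(z_s)-f_*(z_s))^2$, so the two cases collapse into a single per-round inequality. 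Summing over $t$ and using $\sum_t\min\{1,w_t\}=\dim_\lambda(\cF, Z, \bbeta)$ reproduces exactly the right-hand side of \eqref{eq:def_gen_DC} with $\dim_\lambda$ in place of $\dc_\lambda$.

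I expect the only delicate point to be calibrating the split so that a single coefficient $1+\tfrac1{4\gamma}$ works in both branches: the $w_t\ge1$ branch relies on the boundedness of rewards to furnish $\min\{1,w_t\}=1$, whereas the $w_t<1$ branch needs the AM-GM split tuned so the residual historical-error sum appears with coefficient exactly $\gamma/\beta_t$ (matching the first sum in \eqref{eq:def_gen_DC}) and the $\lambda$-term likewise with coefficient $\gamma/\beta_t$. Everything after the pointwise bound is a routine summation.
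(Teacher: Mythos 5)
Your proof is correct and follows essentially the same route as the paper's: both split on whether $\beta_t\cD^2_{\lambda,\cF}(z_t; z_{[t-1]},\bbeta_{[t-1]})$ exceeds $1$, use boundedness ($f_t(z_t)-f_*(z_t)\le1$) in the large case, and apply AM-GM together with the $(f_1,f_2)=(f_t,f_*)$ instantiation of the supremum in the small case, so that $\sum_t\min\{1,\beta_t\cD^2\}$ becomes the generalized Eluder dimension with the combined coefficient $1+\tfrac{1}{4\gamma}$. Your per-round square-root formulation versus the paper's indicator/ratio manipulation is only a cosmetic difference.
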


The proofs of Propositions \ref{prop:linear_GDC} and \ref{prop:general_GDC} are given in Appendix \ref{section:generalized_dc}.

We now present the regret upper bound of \FGTSVA:
\begin{theorem}\label{theorem:regret_upper_bound}
Suppose that Assumption \ref{assumption:subgaussian_noise} holds, the function class $\cF$ has finite cardinality, and the prior distribution $p_0$ is the uniform distribution on $\cF$. Assume that parameters $\eta_t$ and $\lambda_t$ are chosen according to \eqref{eq:parameters}, and the hyperparameters are chosen as
\begin{align*}
\alpha=1/\sqrt{T}, \quad\lambda=1,\quad\epsilon=\alpha^{-2},\quad c=2\sqrt{\dc_{\lambda, \epsilon, T}^{-1}(\cF)\log|\cF|}.
\end{align*}
Then the total regret of \FGTSVA~satisfies
\begin{align*}
\EE[\Regret(T)]\lesssim\sqrt{(1+\Lambda)\dc_{\lambda, \epsilon, T}(\cF)\log|\cF|}+\dc_{\lambda, \epsilon, T}(\cF).
\end{align*}
\end{theorem}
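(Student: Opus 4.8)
The plan is to decompose the per-step regret and treat the exploration and estimation errors separately. Since $a_t=\argmax_{a\in\cA_t}f_t(x_t,a)$, we have $f_t(x_t,a_t)=\max_{a}f_t(x_t,a)$, so I would write
\begin{align*}
f_*(x_t,a_t^*)-f_*(x_t,a_t)=\underbrace{\big(f_*(x_t,a_t^*)-f_t(x_t,a_t)\big)}_{\text{feel-good part}}+\underbrace{\big(f_t(x_t,a_t)-f_*(x_t,a_t)\big)}_{\text{decoupling part}}.
\end{align*}
The decoupling part, summed over $t$ with $z_t=(x_t,a_t)$, is exactly the left-hand side of the generalized decoupling coefficient \eqref{eq:def_gen_DC}, whereas the feel-good part will be absorbed by the exploration term in the posterior \eqref{eq:posterior}.

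For the feel-good part, I would apply the Gibbs variational principle to $p_t\propto p_0\exp(-L_{t-1}(f)+\lambda_t\max_a f(x_t,a))$, where $L_{t-1}(f)=\sum_{s<t}\eta_s(r_s-f(x_s,a_s))^2$. Comparing $p_t$ against the point mass at $f_*$, whose KL divergence to the uniform prior equals $\log|\cF|$, gives
\begin{align*}
\lambda_t\big(f_*(x_t,a_t^*)-\EE_{f_t}[f_t(x_t,a_t)]\big)\le\EE_{f_t}[L_{t-1}(f_*)-L_{t-1}(f_t)]-\KL(p_t\|p_0)+\log|\cF|.
\end{align*}
Writing $\Delta_s=f(x_s,a_s)-f_*(x_s,a_s)$, the log-likelihood ratio equals $\sum_{s<t}\eta_s(2\epsilon_s\Delta_s-\Delta_s^2)$. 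Using Assumption~\ref{assumption:subgaussian_noise} together with the identity $\sigma_s^2\eta_s^2\le\eta_s$ (which holds precisely because $\eta_s=\bar\sigma_s^{-2}$ and $\sigma_s^2\le\bar\sigma_s^2$), a moment-generating-function computation shows that for each fixed $f$ the quantity $M_{t-1}(f)\coloneqq\exp\!\big(\sum_{s<t}[\eta_s(2\epsilon_s\Delta_s-\Delta_s^2)+\tfrac12\eta_s\Delta_s^2]\big)$ has $\EE[M_{t-1}(f)]\le1$. Bounding $\EE_{f_t}[\log M_{t-1}(f_t)]\le\log\sum_{f\in\cF}M_{t-1}(f)$ and applying Jensen over the finite class then yields $\EE[\EE_{f_t}[L_{t-1}(f_*)-L_{t-1}(f_t)]]\le\log|\cF|-\tfrac12\EE[\EE_{f_t}[\sum_{s<t}\eta_s\Delta_s^2]]$, so the feel-good part contributes a benign $2\log|\cF|/\lambda_t$ term plus a crucial \emph{negative} weighted-squared-error term $-\tfrac1{2\lambda_t}\sum_{s<t}\eta_s\Delta_s^2$.

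For the decoupling part I would invoke \eqref{eq:def_gen_DC} with $\beta_t=\eta_t=\bar\sigma_t^{-2}$, which satisfies $\beta_t\le\alpha^{-2}=\epsilon$ so that the supremum bound $\dc_{\lambda,\epsilon,T}(\cF)$ applies; this produces a \emph{positive} weighted-squared-error term $\gamma\sum_t\beta_t^{-1}\sum_{s<t}\beta_s\Delta_s^2$. The heart of the argument is to pick the global $\gamma=\tfrac1{2c\sqrt{\Lambda_T}}$: since $\Lambda_t\le\Lambda_T$ and $\lambda_t^{-1}=\bar\sigma_t^2/(c\sqrt{\Lambda_t})$, this positive term is dominated term-by-term by the negative term from the feel-good part, and the two weighted-squared-error sums cancel exactly. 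I expect this simultaneous matching of the variance weights $\eta_s$, the exploration weights $\lambda_t$, and the decoupling weights $\beta_s$ through a single $\gamma$ to be the main obstacle; it works only because $\lambda_t\propto\sqrt{\Lambda_t}$ is nondecreasing and because the Type B posterior keeps a single feel-good term per step, which is why Type B is preferred in the variance-aware setting.

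It remains to collect the surviving terms. The leftover feel-good contribution is $\sum_t 2\log|\cF|/\lambda_t=\tfrac{2\log|\cF|}{c}\sum_t\bar\sigma_t^2/\sqrt{\Lambda_t}\lesssim\tfrac{\log|\cF|}{c}\sqrt{\Lambda_T}$ by the telescoping bound $\sum_t\bar\sigma_t^2/\sqrt{\Lambda_t}\le2\sqrt{\Lambda_T}$, while the decoupling residuals are $\gamma\lambda\sum_t\beta_t^{-1}=\sqrt{\Lambda_T}/(2c)$ and $(1+\tfrac1{4\gamma})\dc_{\lambda,\epsilon,T}(\cF)=(1+\tfrac{c\sqrt{\Lambda_T}}{2})\dc_{\lambda,\epsilon,T}(\cF)$. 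Using $\Lambda_T\le1+\Lambda$ (because $T\alpha^2=1$) and substituting $c=2\sqrt{\dc_{\lambda,\epsilon,T}^{-1}(\cF)\log|\cF|}$ balances the two dominant terms $\tfrac{\log|\cF|}{c}\sqrt{1+\Lambda}$ and $\tfrac{c\sqrt{1+\Lambda}}{2}\dc_{\lambda,\epsilon,T}(\cF)$, both of order $\sqrt{(1+\Lambda)\,\dc_{\lambda,\epsilon,T}(\cF)\log|\cF|}$, while the residual $\dc_{\lambda,\epsilon,T}(\cF)$ supplies the additive term, giving $\EE[\Regret(T)]\lesssim\sqrt{(1+\Lambda)\dc_{\lambda,\epsilon,T}(\cF)\log|\cF|}+\dc_{\lambda,\epsilon,T}(\cF)$.
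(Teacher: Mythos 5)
Your proposal is correct, and it follows the paper's own proof architecture almost exactly: the same decomposition of the per-step regret into a decoupling part and a feel-good part, the same invocation of the generalized decoupling coefficient with $\beta_t=\bar\sigma_t^{-2}$ and a single global $\gamma=1/(2c\sqrt{\Lambda_T})$, the same use of the Gibbs/Donsker--Varadhan optimality of the posterior against the point mass $\delta_{f_*}$ (yielding $\KL(\delta_{f_*}||p_0)=\log|\cF|$), the same subgaussian exponential-moment bound exploiting $\sigma_s^2\eta_s^2\le\eta_s$, the same term-by-term domination of the positive weighted squared-error sum by the negative one via $\Lambda_t\le\Lambda_T$, and the same telescoping bound $\sum_t\bar\sigma_t^2/\sqrt{\Lambda_t}\le2\sqrt{\Lambda_T}$ (Lemma \ref{lemma:sum_1/gamma_t}) before balancing with the stated $c$. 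The one place you genuinely deviate is in how the expectation of the exponential is eliminated: the paper applies Lemma \ref{lemma:KL_reg} a \emph{second} time, as an inequality with respect to the prior $p_0$, so that the two $\KL(p_t||p_0)$ terms cancel and a single $\log|\cF|$ survives through the quantity $Z_t$ (Lemma \ref{lemma:main}); you instead discard $-\KL(p_t||p_0)\le0$ and control $\EE_{f_t\sim p_t}[\log M_{t-1}(f_t)]$ by the elementary finite-class bound $\log M_{t-1}(f_t)\le\log\sum_{f\in\cF}M_{t-1}(f)$ followed by Jensen over the history, paying $2\log|\cF|$ rather than $\log|\cF|$. That substitution is harmless here (a constant factor absorbed by $\lesssim$), but it is intrinsically tied to $|\cF|<\infty$, whereas the paper's route through $Z$ is what allows Theorem \ref{theorem:general} and Lemma \ref{lemma:Z} to extend the result to infinite function classes; apart from this local trade of generality for simplicity, the two proofs coincide.
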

The proof of Theorem \ref{theorem:regret_upper_bound}, as well as a more general version of Theorem \ref{theorem:regret_upper_bound} that extends to the case of infinite function class, is given in Appendix \ref{section:proof_variance_upper_bound}.
\begin{remark}
Under the setting of linear contextual bandits, $\cF$ can be chosen as the $\varepsilon$-net of the unit ball whose cardinality satisfies $\log|\cF|=\tilde\cO(d)$. Additionally, the generalized decoupling coefficient satisfies $\dc=\tilde\cO(d)$ as is shown in Proposition \ref{prop:linear_GDC}. Therefore, when applied to linear contextual bandits, \FGTSVA~has a \textbf{nearly-optimal regret} of $\tilde\cO(d\sqrt\Lambda+d)$, similar to UCB-based algorithms \citep{zhou2022computationally, zhao2023variance}.
\end{remark}
\begin{remark}
Under the setting of deterministic reward, the total variance is $\Lambda=0$, and the regret of \FGTSVA~is $\cO(\dc)$\footnote{Although there is a $1+\Lambda$ term in the regret bound of Theorem \ref{theorem:regret_upper_bound}, it can be further suppressed by setting $\alpha$ to be an even smaller number.}. Note that the generalized decoupling coefficient is upper bounded by the generalized Eluder dimension, which reduces to the standard Eluder dimension \citep{russo2013eluder} in the deterministic case. Therefore, the regret of \FGTSVA~is \textbf{minimax-optimal} in the deterministic case for the general reward function class \citep{jia2024does}.
\end{remark}
\begin{remark}
When $\sigma_t^2=1$ for all $t\in[T]$, the setting reduces to the standard contextual bandits, and the reward of \FGTSVA~is $\tilde\cO(\sqrt{T\dc\cdot\log|\cF|}+\dc)$, which is $\tilde\cO(d\sqrt T)$ for linear contextual bandits.
Therefore, although \citet{dann2021provably} only studied FGTS in model-free RL, we have shown that a similar posterior distribution in \eqref{eq:posterior_FGTS_B}, is applicable to the setting of contextual bandits, and enjoys the minimax-optimal regret bound for the linear regret function class similar to \citet{zhang2022feel}.
Additionally, since the parameters are $\eta_t=1$ and $\lambda_t=\Theta(\dc^{-1}\sqrt t\log|\cF|)$, \FGTSVA~is reduced to the \textbf{first FGTS-based algorithm that does not require knowledge of the horizon} $T$.
\end{remark}

\section{Overview of Proof}\label{section:proof_sketch}

We first define several shorthand notations: We use $\dc$ to denote the (generalized) decoupling coefficient, and denote
\begin{gather*}
\Delta L(f, x, a, r)=(r-f(x, a))^2-(r-f_*(x, a))^2,\quad\FG_t(f)=\max_{a\in\cA_t}f(x_t, a)-f_*(x_t, a_t^*),\\
\LS_t(f)=(f(x_t, a_t)-f_*(x_t, a_t))^2,
\end{gather*}
then the posterior distribution \eqref{eq:posterior} is equivalent to
\begin{align}\label{eq:posterior_equiv}
p(f|S_{t-1})\propto p_0(f)\exp\bigg(-\sum_{s=1}^{t-1}\Delta L(f, x_s, a_s, r_s)+\lambda_t\FG_t(f)\bigg).
\end{align}
Note that the regret at step $t$ can be decomposed as
\begin{align*}
\EE[f_*(x_t, a_t^*)-f_*(x_t, a_t)]&=\EE[f_t(x_t, a_t)-f_*(x_t, a_t)]-[f_t(x_t, a_t)-f_*(x_t, a_t^*)]\\
&=\underbrace{\EE[f_t(x_t, a_t)-f_*(x_t, a_t)]}_{\text{Bellman Error}}-\EE[\FG_t(f_t)],
\end{align*}
where the second equality holds because $a_t$ is the maximizer of $f_t(x_t, \cdot)$. The Bellman error term $\EE[f_t(x_t, a_t)-f_*(x_t, a_t)]$ is then bounded using the (generalized) decoupling coefficient, which has two versions corresponding to the two types of posterior distributions. In the remaining of this section, we will explain the reason why Type B of the posterior distribution in \eqref{eq:posterior_FGTS_B} is the basis of our algorithm instead of Type A in \eqref{eq:posterior_FGTS_A} by first explaining the obstacles in the analysis based on Type A, and then showing how \FGTSVA~built on Type B manages to overcome the obstacle. We will also explain the technical trick in the construction of $\lambda_t$.

\subsection{Technical Obstacle when Applying Type A of FGTS}

When applying the posterior distribution similar to \eqref{eq:posterior_FGTS_A}, i.e.
\begin{align*}
p_t(f|S_{t-1})\propto p_0(f)\exp\bigg(\sum_{s=1}^{t-1}[-\eta_s\Delta L(f, x_s, a_s, r_s)+\lambda_s\FG_s(f)]\bigg),
\end{align*}
the proof developed based on \citet{zhang2022feel} requires that $\eta_t$ is upper bounded by an absolute constant that is irrelevant to $T$, which results in the regret bound polynomial in $T$ because of the occurrence of $\sum1/\eta_t$ in the regret bound. In detail, by applying the decoupling coefficient for this type of posterior distribution, we have
\begin{align*}
&\EE[f(x_t, a_t)-f_*(x_t, a_t)]\le\frac{\dc}{4\gamma_t}+\gamma_t\EE_{S_{t-1}, x_t}\EE_{a_t|S_{t-1}, x_t}\EE_{\tilde f\sim p_t}\LS_t(\tilde f).
\end{align*}
Therefore, it suffices to prove an upper bound for
\begin{align}\label{eq:desired_upper bound}
\sum_{t=1}^T\frac{\dc}{4\gamma_t}+\sum_{t=1}^T\EE_{S_{t-1}, x_t}\EE_{\tilde f\sim p_t}\big[\gamma_t\EE_{a_t|S_{t-1}, x_t}\LS_t(\tilde f)-\FG_t(\tilde f)\big].
\end{align}
We define the potential as
\begin{align*}
Z_t=\EE_{S_t}\log\EE_{f\sim p_0}\exp\bigg(\sum_{s=1}^{t}[-\eta_s\Delta L(f, x_s, a_s, r_s)+\lambda_s\FG_s(f, x_s)]\bigg).
\end{align*}
The proof proceeds by bounding $Z_t-Z_{t-1}$ and applying the telescope sum. Note that
\begin{align*}
&Z_t-Z_{t-1}=\EE_{S_t}\log\EE_{\tilde f\sim p_t}\exp\big(-\eta_s\Delta L(f, x_t, a_t, r_t)+\lambda_t\FG_t(f, x_s)\big)\\
&\le\frac12\EE_{S_t}\Big[\underbrace{\log\EE_{\tilde f\sim p(\cdot|S_{t-1})}\exp(-2\eta_s\Delta L(f, x_t, a_t, r_t))}_{I_1}+\underbrace{\log\EE_{\tilde f\sim p(\cdot|S_{t-1)}}\exp(2\lambda_t\FG_t(\tilde f, x_t))}_{I_2}\Big],
\end{align*}
where the inequality holds due to H\"older's inequality. By using the Hoeffding's Lemma, the term $I_2$ can be bounded by $2\lambda_t\EE_{\tilde f\sim p(\cdot|S_{t-1})}\FG_t(\tilde f, x_t)+2\lambda_t^2$. For the term $I_1$, By first taking the conditional expectation on $\cG_t$ and using Assumption \ref{assumption:subgaussian_noise}, we have
\begin{align}\label{eq:EE_I1|Gt}
\EE[I_1|\cG_t]&\le\log\EE_{\tilde f\sim p_t}\exp(-2\eta_t(1-\sigma_t^2\eta_t)\LS_t(\tilde f)).
\end{align}
By choosing $\eta_t\le\sigma_t^{-2}/2$, the coefficient $2\eta_t(1-\sigma_t^2\eta_t)$ can be lower bounded by $\eta_t$. To connect \eqref{eq:EE_I1|Gt} with \eqref{eq:desired_upper bound}, the RHS of \eqref{eq:EE_I1|Gt} has to be bounded by $-C\eta_t\cdot\EE_{\tilde f\sim p(\cdot|S_{t-1})}\LS_t(\tilde f)$ where $C$ is an absolute constant. This is possible only when $\eta_t\LS_t(\tilde f)=\cO(1)$. Since $\LS_t(\tilde f)=\Theta(1)$ in the worst case, $\eta_t$ should be an absolute constant. Comparing what we obtain against \eqref{eq:desired_upper bound}, we note that (i) $\lambda_t$ has to be a constant $\lambda$ to enable the telescope sum of $Z_t-Z_{t-1}$, and (ii) to make coefficients match, we require $\gamma_t=C\eta_t/\lambda=\Theta(\lambda^{-1})$. Thus, the first term of \eqref{eq:desired_upper bound} becomes $\cO(\lambda T\dc)$, with an undesirable factor of $T$. Therefore, Type A of FGTS in \eqref{eq:posterior_FGTS_A} cannot yield variance-aware regret bounds with existing techniques, even if inhomogeneous parameters $\eta_t$ and $\lambda_t$ are allowed.

\subsection{Highlight of Proof Techniques}
By proceeding through a sharply different path, the analysis of \FGTSVA~avoids the aforementioned obstacle that stems from bounding the expectation of the exponential term on RHS of \eqref{eq:EE_I1|Gt}. The following two techniques work together to relate the posterior distribution with the desired form of the decoupling coefficient:

\noindent\textbf{Technique 1: Prioritizing expectation over the randomness of reward.}
Although the expectation of an exponential term over the randomness of posterior sampling causes trouble, the expectation over the randomness of $\epsilon_t$ adopts a simple form due to Assumption \ref{assumption:subgaussian_noise}. Therefore, by defining
\begin{align*}
\xi_s(\tilde f, x_s, a_s, r_s)=-\eta_s\Delta L(\tilde f, x_s, a_s, r_s)-\log\EE[\exp(-\eta_s\Delta L(\tilde f, x_s, a_s, r_s))|\cG_s],
\end{align*}
we have the following property (following \citet{dann2021provably}; formalized in Lemma \ref{lemma:Exp_exp=1}):
\begin{align*}
\EE_{S_t}\exp\bigg(\sum_{s=1}^t\xi_s(\tilde f, x_s, a_s, r_s)\bigg)=1.
\end{align*}
By using the Jensen's inequality, we have
\begin{align}
0&=\log\EE_{\tilde f\sim p_0}\EE_{S_{t-1}, x_t}\exp\bigg(\sum_{s=1}^{t-1}\xi_s(\tilde f, x_s, a_s, r_s)\bigg)\nonumber\\
&\ge\EE_{S_{t-1}, x_t}\log\EE_{\tilde f\sim p_0}\exp\bigg(\sum_{s=1}^{t-1}\xi_s(\tilde f, x_s, a_s, r_s)\bigg)\label{eq:ind_Exp_exp=1}.
\end{align}

\noindent\textbf{Technique 2: KL-regularized optimality.}
The following lemma is used to remove the exponential on the RHS of \eqref{eq:ind_Exp_exp=1}:
\begin{lemma}[Donsker–Varadhan duality, see e.g., Proposition 7.16 in \citet{zhang2023mathematical}]\label{lemma:KL_reg}
Let $(\cX, \cF, P_0)$ be a probability space and $U(x)$ be a measurable function. Then for any distribution $P$ on $(\cX, \cF)$, we have
\begin{align*}
\EE_{x\sim P}[U(x)]+\KL(P||P_0)\ge-\log\EE_{x\sim P_0}\exp(-U(x)),
\end{align*}
and the infimum is attained when $P(x)\propto P_0(x)\exp(-U(x))$.
\end{lemma}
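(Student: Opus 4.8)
The plan is to prove the inequality by introducing the \emph{Gibbs tilting} of $P_0$ and then exploiting the nonnegativity of the KL divergence. First I would define the tilted measure $Q$ on $(\cX, \cF)$ through its density with respect to $P_0$, namely $\frac{dQ}{dP_0}(x) = \exp(-U(x))/Z$, where $Z \coloneqq \EE_{x\sim P_0}\exp(-U(x))$ is the normalizing constant; this $Z$ is exactly the quantity appearing inside the logarithm on the right-hand side. Provided $0 < Z < \infty$, the measure $Q$ is a well-defined probability distribution that is absolutely continuous with respect to $P_0$ by construction. The whole point of this auxiliary object is that $Q$ is precisely the claimed optimizer, so measuring how far $P$ sits from $Q$ will produce the slack in the bound.

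The key step is an exact identity obtained by expanding $\KL(P\|Q)$ via a change of measure. I would first dispose of the trivial case where $P$ is not absolutely continuous with respect to $P_0$: then $\KL(P\|P_0) = +\infty$, so the left-hand side is $+\infty$ and the bound holds vacuously. Assuming $P \ll P_0$, I write $\frac{dP}{dQ} = \frac{dP}{dP_0}\cdot\frac{dP_0}{dQ}$ and substitute $\frac{dP_0}{dQ}(x) = Z\exp(U(x))$. Taking the logarithm and integrating against $P$ then yields
\begin{align*}
\KL(P\|Q) = \EE_{x\sim P}\Big[\log\tfrac{dP}{dP_0}(x)\Big] + \log Z + \EE_{x\sim P}[U(x)] = \KL(P\|P_0) + \EE_{x\sim P}[U(x)] + \log Z.
\end{align*}

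The conclusion follows immediately from nonnegativity of the KL divergence. Since $\KL(P\|Q)\ge 0$ (an instance of Gibbs' inequality, itself a consequence of Jensen applied to the convex map $t\mapsto t\log t$, equivalently to $-\log$), rearranging the identity gives
\begin{align*}
\EE_{x\sim P}[U(x)] + \KL(P\|P_0) = \KL(P\|Q) - \log Z \ge -\log Z = -\log\EE_{x\sim P_0}\exp(-U(x)),
\end{align*}
which is exactly the desired inequality. Moreover, equality holds if and only if $\KL(P\|Q) = 0$, i.e. $P = Q$, which is the stated optimizer $P(x)\propto P_0(x)\exp(-U(x))$; this shows the infimum over $P$ is attained at the Gibbs measure.

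I expect the only genuine obstacles to be the degenerate boundary cases rather than the main computation. Specifically, I would need to confirm that $Z$ is finite and strictly positive so that $Q$ is a bona fide probability measure, and to ensure that $\EE_{x\sim P}[U(x)]$ and $\KL(P\|P_0)$ are individually well-defined (never an indeterminate $\infty - \infty$) so the rearrangement is legitimate. When $Z = +\infty$ the right-hand side equals $-\infty$ and the bound is trivial, and when any relevant integral diverges the inequality again holds vacuously; the substantive case is $0<Z<\infty$ with $P\ll P_0$. Apart from these integrability caveats, the argument reduces to a single change-of-measure identity together with $\KL \ge 0$, so the only conceptual content is recognizing the Gibbs measure $Q$ as the correct reference distribution against which to compare $P$.
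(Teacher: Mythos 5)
Your proposal is correct. Note that the paper does not prove this lemma at all: it is stated as a known result with a pointer to Proposition 7.16 of \citet{zhang2023mathematical}, so there is no in-paper argument to compare against. Your Gibbs-tilting proof is the canonical one (and essentially the one in the cited reference): define $Q$ with $\frac{dQ}{dP_0}=\exp(-U)/Z$, derive the exact identity $\KL(P\|Q)=\KL(P\|P_0)+\EE_{x\sim P}[U(x)]+\log Z$, and invoke $\KL(P\|Q)\ge0$, which simultaneously yields the inequality and identifies the Gibbs measure as the unique minimizer --- something the alternative one-line route (Jensen applied to $\log\EE_{x\sim P_0}[e^{-U}]=\log\EE_{x\sim P}\big[e^{-U}\tfrac{dP_0}{dP}\big]$ under $P\ll P_0$ with $P\sim P_0$, or a truncation thereof) gives only after a separate equality-case analysis. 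Your handling of degenerate cases is also essentially right, with one small refinement worth recording: when $\KL(P\|P_0)<\infty$ but $\EE_{x\sim P}[U(x)]=-\infty$, the left-hand side is $-\infty$ and the inequality is \emph{not} vacuous a priori; it is rescued because the Jensen-direction bound forces $\log Z=+\infty$ in that case, so the right-hand side is $-\infty$ as well. For the paper's application this is immaterial, since there $U$ is a finite sum of almost-surely finite terms over a finite (or suitably regular) function class, so $0<Z<\infty$ and all expectations are well-defined.
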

The RHS of Lemma \ref{lemma:KL_reg} contains the expectation of the exponential term, similar to the RHS of \eqref{eq:ind_Exp_exp=1}, and the LHS is the simple expectation of $U(\cdot)$, free of exponential terms.
The price of the removal of the exponential is an additional KL-divergence term, so we use Lemma \ref{lemma:KL_reg} twice to cancel out the KL-divergence, one using the inequality itself, and the other using the optimality condition:
\begin{gather}
\log\EE_{\tilde f\sim p_0}\exp\bigg(\sum_{s=1}^{t-1}\xi_s(\tilde f, x_s, a_s, r_s)\bigg)\ge\sum_{s=1}^{t-1}\EE_{\tilde f\sim p_t}\xi_s(\tilde f, x_s, a_s, r_s)-\KL(p_t||p_0);\label{eq:KL_reg_bound}\\
-\KL(\delta_{f_*}||p_0)\le\EE_{\tilde f\sim p_t}\bigg[-\sum_{s=1}^{t-1}\eta_s\Delta L(\tilde f, x_s, a_s, r_s)+\lambda_t\FG_t(\tilde f)\bigg]-\KL(p_t||p_0).\label{eq:KL_reg_optimal}
\end{gather}
Plugging \eqref{eq:KL_reg_bound} and \eqref{eq:KL_reg_optimal} into \eqref{eq:ind_Exp_exp=1}, noting that $\KL(\delta_{f_*}||p_0)=\log|\cF|$, we have
\begin{align}
\log|\cF|&\ge\EE_{S_{t-1}, x_t}\EE_{\tilde f\sim p_t}\bigg[-\sum_{s=1}^{t-1}\log\EE[\exp(-\eta_s\Delta L(\tilde f, x_s, a_s, r_s)|\cG_s]-\lambda_t\FG_t(\tilde f)\bigg]\nonumber\\
&\ge\EE_{S_{t-1}, x_t}\EE_{\tilde f\sim p_t}\bigg[\sum_{s=1}^{t-1}\frac{\eta_s}{2}\LS_s(\tilde f)-\lambda_t\FG_t(\tilde f)\bigg],\label{eq:proof_sketch_main_ineq}
\end{align}
where the second inequality holds due to Assumption \ref{assumption:subgaussian_noise} with an argument similar to \eqref{eq:EE_I1|Gt}. \eqref{eq:proof_sketch_main_ineq} is thus completely free of the expectation of exponential terms.

\noindent\textbf{Avoiding $\Lambda$ in parameters.}
If we follow the proof of \citet{dann2021provably}, then we require $\lambda_t=1/(2\gamma\bar\sigma_t^2)$ where $\gamma$ scales with $\Lambda^{-0.5}$, which is unknown to the agent.
To resolve this issue, we observe that the proof can proceed by replacing the total variance $\Lambda$ with the partial sum $\Lambda_t$.
Specifically, dividing $\lambda_t=c\sqrt{\Lambda_t}/\bar\sigma_t^2$ on both sides of the \eqref{eq:proof_sketch_main_ineq}, noting that $\Lambda_t\le\Lambda_T$, we have
\begin{align*}
\frac{\bar\sigma_t^2}{c\sqrt{\Lambda_t}}\log|\cF|\ge\EE_{S_{t-1}, x_t}\EE_{\tilde f\sim p_t}\bigg[\frac{\bar\sigma_t^2}{2c\sqrt{\Lambda_T}}\sum_{s=1}^{t-1}\bar\sigma_s^{-2}\LS_s(\tilde f)-\FG_t(\tilde f)\bigg].
\end{align*}
Plugging the inequality into the definition of the decoupling coefficient, we have
\begin{align*}
\EE[\Regret(T)]\lesssim\frac{\log|\cF|}{c}\sum_{t=1}^T\frac{\bar\sigma_t^2}{\sqrt{\Lambda_t}}+\bigg(1+\frac{c\sqrt{\Lambda_T}}{2}\bigg)\dc\le\frac{2\sqrt{\Lambda_T}\log|\cF|}{c}+\bigg(1+\frac{c\sqrt{\Lambda_T}}{2}\bigg)\dc,
\end{align*}
where we use the crucial technical lemma (Lemma \ref{lemma:sum_1/gamma_t}) stating that $\sum_{t=1}^T\bar\sigma_t^2/\sqrt{\Lambda_t}\le2\sqrt{\Lambda_T}$. Finally, by choosing $c=2\sqrt{\dc^{-1}\log|\cF|}$ (which is irrelevant to $\Lambda$) and noting that $\Lambda_T=\Theta(\alpha^2T+\Lambda)$, the regret can be bounded by $\cO(\sqrt{(\alpha^2T+\Lambda)\dc\cdot\log|\cF|}+\dc)$.

\section{Experiments}

In this section, we examine our algorithm, \FGTSVA, against baselines (including Weighted OFUL+, \texttt{FGTS}, and SAVE) in experiments with synthetic data. The code can be found at \url{https://github.com/xuheng-li99/FGTS-VA}.

\noindent\textbf{Environment.}
We focus on the setting of linear bandits with $d=5$ and $\cX=\{x\}$, so we omit the context $x$ for simplicity. The action set is $\cA_t=\cA=\{\pm1/\sqrt d\}^{d}$, and the ground truth parameter $\btheta_*$ is sampled from the uniform distribution on the unit sphere. We consider two noise models with heterogeneous noise magnitudes. In both cases, the noise $\epsilon_t$ is sampled from $\cN(0, \sigma_t^2)$.
\begin{itemize}[leftmargin=*]
\item[1.] The noise is sparse: $\sigma_t^2=1$ with probability $p$, and $\sigma_t^2=0$ with probability $1-p$. We set $p=0.1$ in our experiments.
\item[2.] The noise is dense: $\sigma_t^2$ is sampled from a $\chi^2$ distribution with degree of freedom equal to 1.
\end{itemize}

\noindent\textbf{Implementation details.} For \FGTSVA, in the linear bandit setting, we let the prior distribution be the Gaussian distribution $\cN(\zero, \Ib_d/d)$.
We use Langevin dynamics to sample from this distribution:
\begin{align*}
\btheta_t^{(k+1)}=\btheta_t^{(k)}+\delta^{(k)}\nabla\log p(\btheta|S_{t-1})+\sqrt{2\delta^{(k)}}\bepsilon_t,
\end{align*}
where $\bepsilon_t$ is the standard Gaussian noise, and $\delta^{(k)}$ is the stepsize. We use $K=20$ SGLD steps in our experiments, and initialize $\btheta_{t+1}^{(0)}=\btheta_{t}^{(K)}$.

\begin{figure}[ht]
\centering
\subfigure[Sparse noise.]{\includegraphics*[width=0.4\linewidth]{./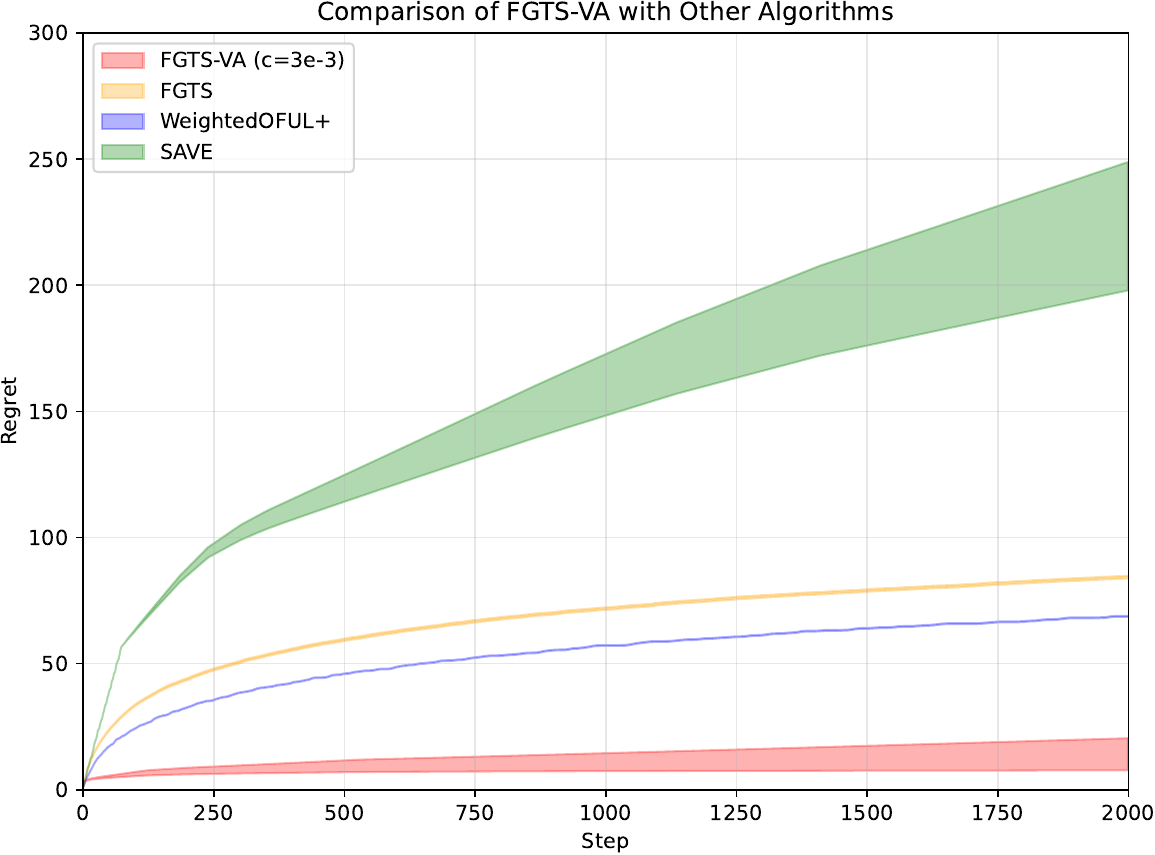}}
\subfigure[Dense noise.]{\includegraphics*[width=0.4\linewidth]{./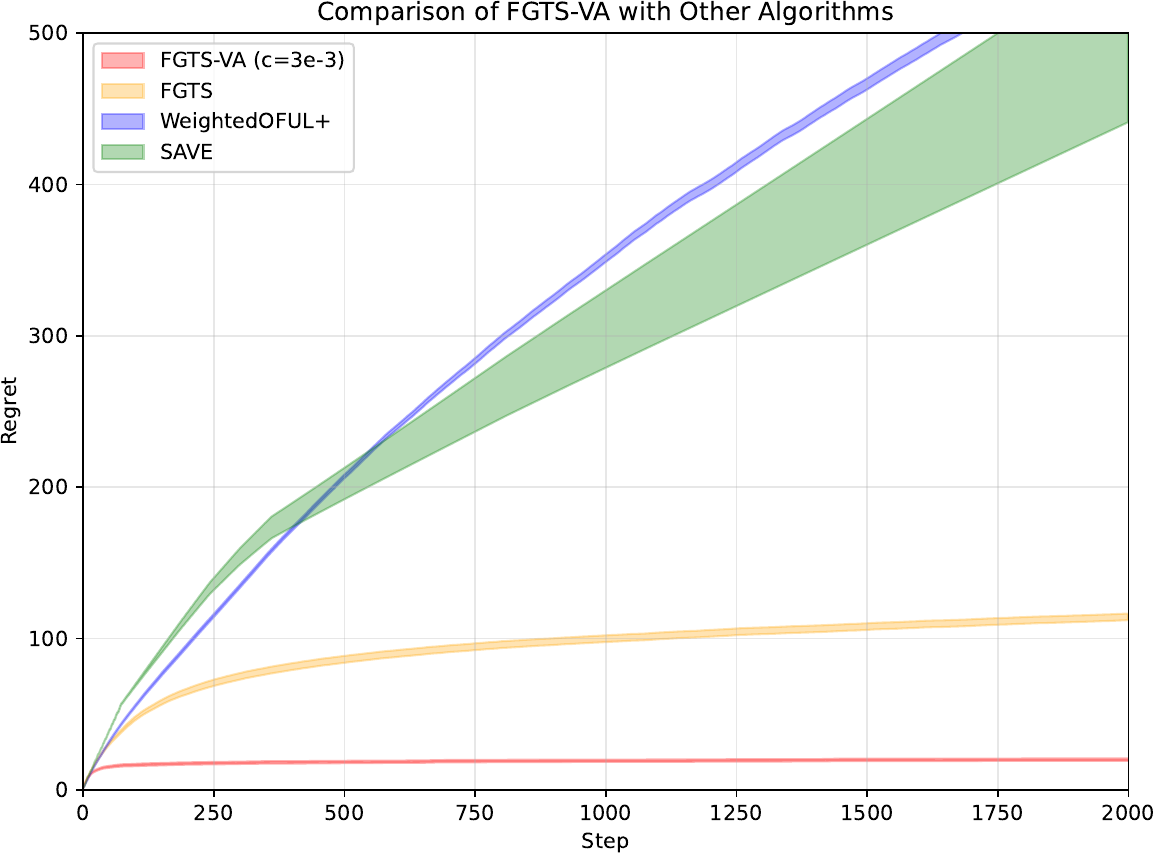}}
\caption{Comparison of different algorithms. Error bands are plotted over 100 runs.}
\label{fig:compare_alg}
\end{figure}
\noindent\textbf{Comparison of different algorithms.}
We first compare \FGTSVA~with $c=0.003$ against Weighted OFUL+ \citep{zhou2022computationally}, \texttt{SAVE} \citep{zhao2023variance}, and \texttt{FGTS} \citep{zhang2022feel} with results in Figure \ref{fig:compare_alg}. For both data models, \FGTSVA~outperforms the baselines by a large margin.

\begin{figure}[ht]
\centering
\subfigure[Sparse noise.]{\includegraphics*[width=0.4\linewidth]{./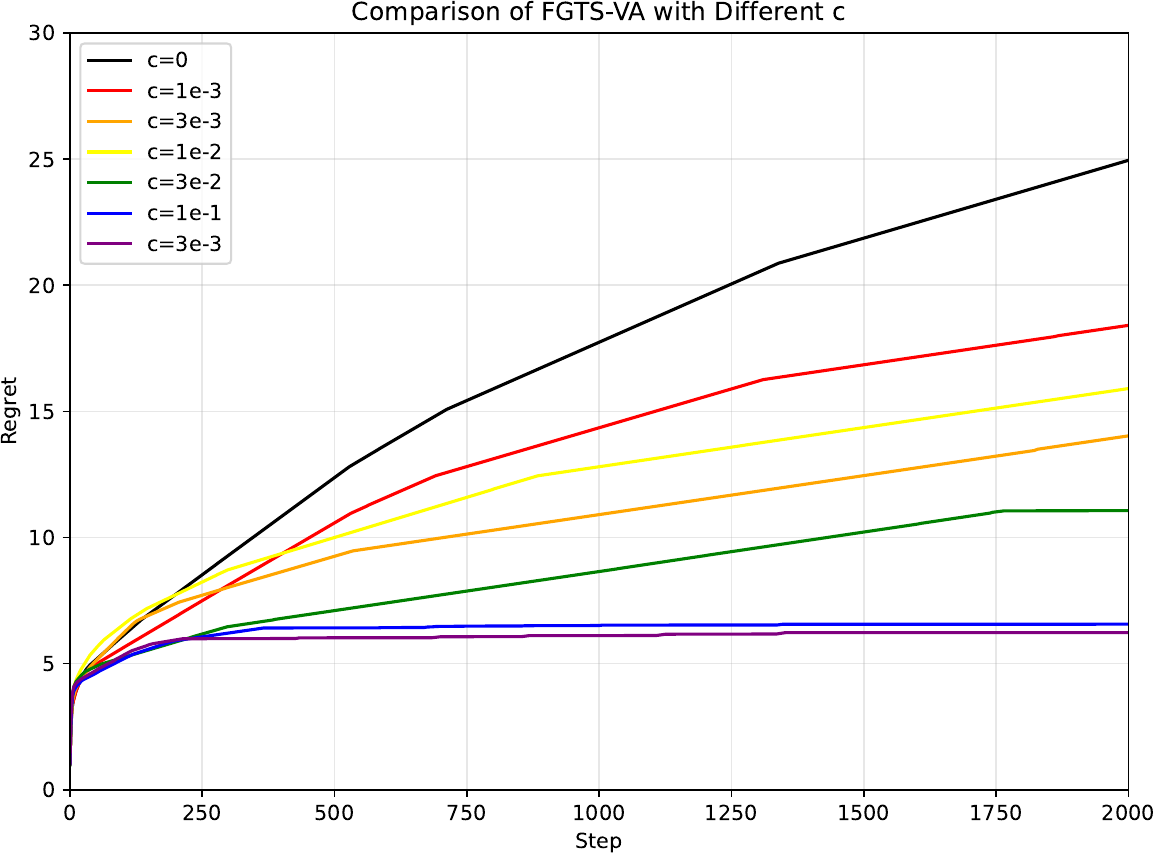}}
\subfigure[Dense noise.]{\includegraphics*[width=0.4\linewidth]{./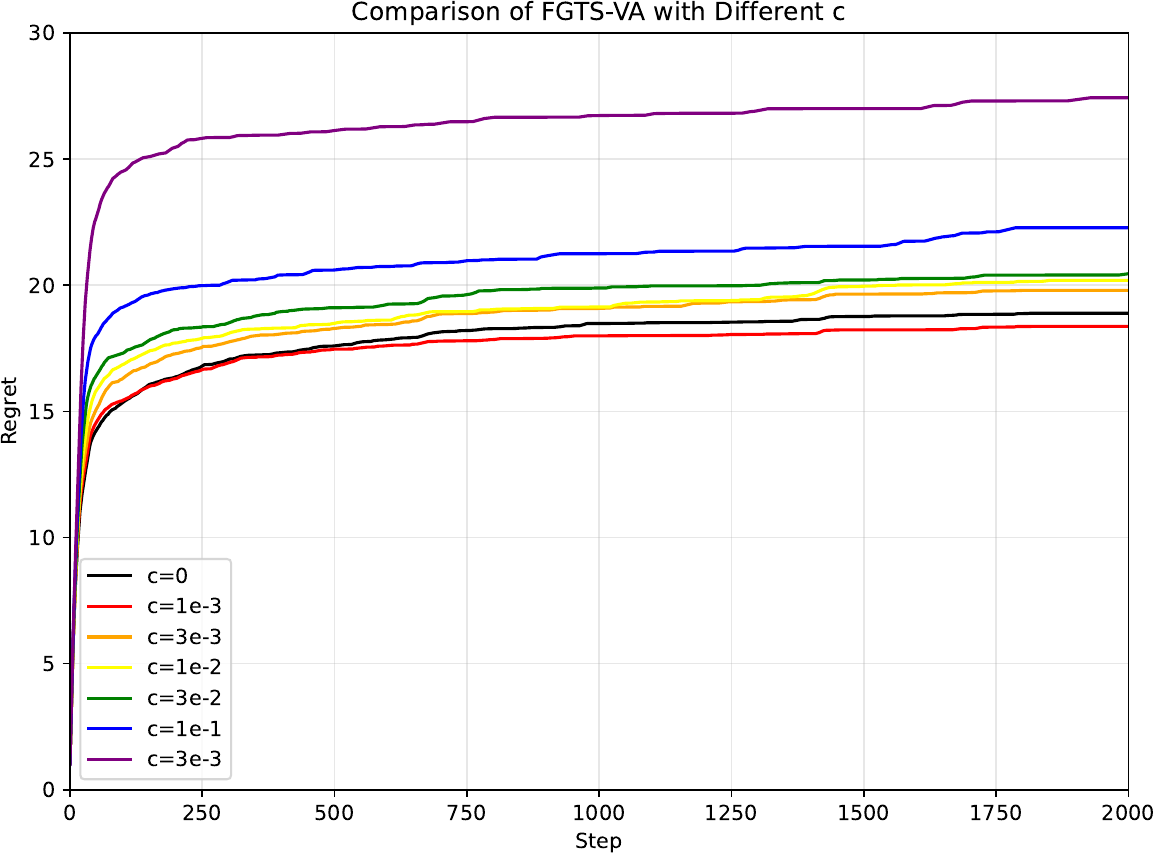}}
\caption{Comparison of different choices of $c$. The averages of regret over 100 runs are plotted.}
\label{fig:compare_c}
\end{figure}
\noindent\textbf{Ablation studies.}
We then perform ablation studies of the algorithm with different choices of $c$. It is worth noting that $c$ is the only tunable parameter of \FGTSVA, and $c=\tilde\Theta(1)$ for linear bandits according to Theorem \ref{theorem:regret_upper_bound}. The results are shown in Figure \ref{fig:compare_c}. For the case of sparse noise, we observe the advantage of choosing $c$ bounded away from $0$, i.e., advantage of the feel-good exploration. For the case of dense noise, the optimal choice of $c$ is close to $0$.

\section{Conclusion}

In this work, we present \FGTSVA, a variance-aware algorithm for general contextual bandits based on Feel-Good Thompson sampling. In the posterior distribution, we incorporate not only variance-related weights, but also a feel-good exploration term that adopts the idea from model-free RL \citep{dann2021provably}. The generalized decoupling coefficient is the pivotal technique in our analysis, with which we show that \FGTSVA~achieves a nearly optimal regret bound similar to UCB-based algorithms. A restriction of our work is the setting of variance revealing, so it is interesting to explore the possibility of designing FGTS-based algorithms without requiring the variance of the current step as in \citet{zhou2022computationally}, and ultimately without knowing the variance at all similar to \citet{zhao2023variance}. Extending of the techniques in this work to reinforcement learning is also an interesting future direction.

\section*{Acknowledgment}
We thank the anonymous reviewers and area chair for their helpful comments. XL and QG are supported in part by the National Science Foundation DMS-2323113 and IIS-2403400.
The views and conclusions contained in this paper are those of the authors and should not be interpreted as representing any funding agencies.

\bibliographystyle{ims}
\bibliography{camera_ready}
\clearpage
\appendix

\section{Generalized Decoupling Coefficient}\label{section:generalized_dc}

\subsection{Linear Reward Function Class}

In this section, we prove the decoupling coefficient for linear contextual bandits. We assume that $\btheta_*=\zero$ without loss of generality. We use the shorthand notation $\bphi_t\coloneqq\bphi(x_t, a_t)$, and define
\begin{align*}
\bSigma_t\coloneqq\lambda\Ib+\sum_{s=1}^t\beta_s\bphi_s\bphi_s^\top.
\end{align*}

The following lemma, known as the elliptical potential lemma, is a well-known result in previous works \citep{abbasi2011improved, zhou2022computationally}:
\begin{lemma}\label{lemma:elliptical_potential}
For any $\lambda>0$, we have
\begin{align*}
\sum_{t=1}^T\min\Big\{\beta_t\|\bphi_t\|_{\bSigma_{t-1}^{-1}}^2, 1\Big\}\le2(\log\det(\bSigma_T)-\log\det(\bSigma_0))\le2d\underbrace{\log(1+(\epsilon T)/(d\lambda))}_{\iota}.
\end{align*}
\end{lemma}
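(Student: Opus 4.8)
The plan is to establish both inequalities by tracking the evolution of $\log\det(\bSigma_t)$ as $t$ increases. First I would exploit the rank-one structure $\bSigma_t = \bSigma_{t-1} + \beta_t\bphi_t\bphi_t^\top$: by the matrix determinant lemma this gives $\det(\bSigma_t) = \det(\bSigma_{t-1})\bigl(1 + \beta_t\|\bphi_t\|_{\bSigma_{t-1}^{-1}}^2\bigr)$, so after taking logarithms and telescoping I obtain the exact identity
\begin{align*}
\log\det(\bSigma_T) - \log\det(\bSigma_0) = \sum_{t=1}^T\log\bigl(1 + \beta_t\|\bphi_t\|_{\bSigma_{t-1}^{-1}}^2\bigr).
\end{align*}
This reduces the left inequality to a termwise comparison.

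For the first inequality, I would invoke the elementary bound $\min\{x, 1\} \le 2\log(1+x)$, valid for all $x \ge 0$ since $2\log(1+x) - x$ is nonnegative and increasing on $[0,1]$ and vanishes at $0$, while $2\log 2 > 1$ handles $x > 1$. Applying this with $x = \beta_t\|\bphi_t\|_{\bSigma_{t-1}^{-1}}^2$, summing over $t$, and substituting the telescoped identity yields the desired bound $\sum_{t=1}^T\min\{\beta_t\|\bphi_t\|_{\bSigma_{t-1}^{-1}}^2, 1\} \le 2(\log\det(\bSigma_T) - \log\det(\bSigma_0))$.

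For the second inequality, I would bound $\log\det(\bSigma_T)$ by its trace via the concavity of $\log\det$ (equivalently AM--GM on the eigenvalues), giving $\log\det(\bSigma_T) \le d\log(\tr(\bSigma_T)/d)$. The trace satisfies $\tr(\bSigma_T) = \lambda d + \sum_{s=1}^T \beta_s\|\bphi_s\|_2^2 \le \lambda d + \epsilon T$ after using $\beta_s \le \epsilon$ together with the feature normalization $\|\bphi_s\|_2 \le 1$. Since $\log\det(\bSigma_0) = \log\det(\lambda\Ib) = d\log\lambda$, subtracting gives $\log\det(\bSigma_T) - \log\det(\bSigma_0) \le d\log(1 + \epsilon T/(d\lambda)) = d\iota$, and the factor of $2$ carried over from the first step completes the claim.

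The main obstacle here is genuinely minor, since this is a standard elliptical potential bound. The only points requiring real care are verifying the termwise inequality $\min\{x,1\} \le 2\log(1+x)$ and being explicit about the two structural ingredients that make the trace estimate sharp enough to yield the $\iota$ factor: the constraint $\beta_s \le \epsilon$ inherited from the supremum in the definition of the generalized decoupling coefficient, and the normalization $\|\bphi_s\|_2 \le 1$ on the features.
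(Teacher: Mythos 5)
Your proof is correct and follows the standard argument for the elliptical potential lemma — the matrix determinant lemma with telescoping, the elementary bound $\min\{x,1\}\le 2\log(1+x)$, and the determinant–trace (AM--GM) estimate — which is precisely the route taken in the works the paper cites for this lemma (it states the result as well known rather than reproving it). The two ingredients you flag, $\beta_s\le\epsilon$ from the supremum in the decoupling-coefficient definition and the feature normalization $\|\bphi_s\|_2\le 1$, are indeed what the paper relies on implicitly to get the stated $2d\log(1+(\epsilon T)/(d\lambda))$ bound.
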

With Lemma \ref{lemma:elliptical_potential}, we decompose the sum of $\langle\btheta_t, \bphi_t\rangle$ into cases based on whether $\beta_t\|\bphi_t\|_{\bSigma_{t-1}^{-1}}^2$ is smaller than $1$, i.e.,
\begin{align}
\sum_{t=1}^T\langle\btheta_t, \bphi_t\rangle=\underbrace{\sum_{t=1}^T\langle\btheta_t, \bphi_t\rangle\ind\Big[\beta_t\|\bphi_t\|_{\bSigma_{t-1}^{-1}}^2\le1\Big]}_{I_1}+\underbrace{\sum_{t=1}^T\langle\btheta_t, \bphi_t\rangle\ind\Big[\beta_t\|\bphi_t\|_{\bSigma_{t-1}^{-1}}^2>1\Big]}_{I_2}.\label{eq:regret_ind_decomp}
\end{align}
For any $\gamma>0$, the term $I_1$ satisfies
\begin{align}
I_1&\le\sum_{t=1}^T\beta_t^{-1/2}\|\btheta_t\|_{\bSigma_{t-1}}\cdot\beta_t^{1/2}\|\bphi_t\|_{\bSigma_{t-1}^{-1}}\cdot\ind\Big[\beta_t\|\bphi_t\|_{\bSigma_{t-1}^{-1}}^2\le1\Big]\nonumber\\
&\le\sum_{t=1}^T\beta_t^{-1/2}\|\btheta_t\|_{\bSigma_{t-1}}\cdot\min\Big\{\beta_t^{1/2}\|\bphi_t\|_{\bSigma_{t-1}^{-1}}, 1\Big\}\nonumber\\
&\le\sum_{t=1}^T\frac\gamma{\beta_t}\|\btheta_t\|_{\bSigma_{t-1}}^2+\frac1{4\gamma}\sum_{t=1}^T\min\Big\{\beta_t\|\bphi_t\|_{\bSigma_{t-1}^{-1}}^2, 1\Big\}\nonumber\\
&\le\sum_{t=1}^T\frac\gamma{\beta_t}\bigg(\lambda\|\btheta_t\|_2^2+\sum_{s=1}^{t-1}\beta_s\langle\btheta_t, \bphi_s\rangle^2\bigg)+\frac{d\iota}{2\gamma}\nonumber\\
&\le\gamma\lambda\sum_{t=1}^T\frac1{\beta_t}+\sum_{t=1}^T\frac{\gamma}{\beta_t}\sum_{s=1}^{t-1}\beta_s\langle\btheta_t, \bphi_s\rangle^2+\frac{d\iota}{2\gamma},\label{eq:small_norm}
\end{align}
where the first inequality holds due to AM-GM inequality, the second inequality holds because $z\cdot\ind[z\le1]\le\min\{z, 1\}$, the third inequality holds due to Cauchy-Schwarz inequality, the fourth inequality holds due to Lemma \ref{lemma:elliptical_potential}, and the last inequality holds because $\|\btheta_t\|_2\le1$. The term $I_2$ satisfies
\begin{align}
I_2&\le\sum_{t=1}^T\ind\Big[\beta_t\|\bphi_t\|_{\bSigma_{t-1}^{-1}}^2>1\Big]\le\sum_{t=1}^T\min\Big\{\beta_t\|\bphi_t\|_{\bSigma_{t-1}^{-1}}^2, 1\Big\}\le2d\iota,\label{eq:large_norm}
\end{align}
where the first inequality holds because $\min\{|\btheta_t, \bphi_t|, 1\}\le1$, the second inequality holds because $\ind[z>1]\le\min\{z, 1\}$, and the last inequality holds due to Lemma \ref{lemma:elliptical_potential}. Plugging \eqref{eq:small_norm} and \eqref{eq:large_norm} into \eqref{eq:regret_ind_decomp}, we have
\begin{align*}
\sum_{t=1}^T\langle\btheta_t, \bphi_t\rangle\le\gamma\sum_{t=1}^T\sum_{s=1}^{t-1}\frac{\beta_s}{\beta_t}\langle\btheta_t, \bphi_s\rangle^2+\gamma\lambda\sum_{t=1}^T\frac1{\beta_t}+2d\iota\bigg(1+\frac1{4\gamma}\bigg).
\end{align*}

\subsection{General Reward Function Class}

In this section, we relate the generalized decoupling coefficient to the generalized Eluder dimension \citep{agarwal2023vo}. The proof is similar to that of the linear reward function. We first make the following decomposition:
\begin{align}
\sum_{t=1}^T(f_t(z_t)-f_*(z_t))&=\underbrace{\sum_{t=1}^T(f_t(z_t)-f_*(z_t))\ind\Big[\beta_t^{1/2}\cD_\cF(z_t; z_{[t-1]}, \bbeta_{[t-1]})\le 1\Big]}_{I_1}\nonumber\\
&\qquad+\underbrace{\sum_{t=1}^T(f_t(z_t)-f_*(z_t))\ind\Big[\beta_t\cD_\cF^2(z_t; z_{[t-1]}, \bbeta_{[t-1]})>1\Big]}_{I_2}.\label{eq:estimation_error_decomp}
\end{align}
The term $I_1$ satisfies
\begin{align}
I_1&=\sum_{t=1}^T\frac{\beta_t^{-1/2}(f_t(z)-f_*(z))}{\cD_{\cF}(z_t; z_{[t-1]}, \bbeta_{[t-1]})}\cdot\Big(\beta_t^{1/2}\cD_{\cF}(z_t; z_{[t-1]}, \bbeta_{[t-1]})\ind[\beta_t^{1/2}\cD_{\cF}(z_t; z_{[t-1]}, \bbeta_{[t-1]})\le1]\Big)\nonumber\\
&\le\sum_{t=1}^T\frac{\beta_t^{-1/2}(f_t(z_t)-f_*(z_t))}{\cD_{\cF}(z_t; z_{[t-1]}, \bbeta_{[t-1]})}\cdot\min\Big\{\beta_t^{1/2}\cD_{\cF}(z_t; z_{[t-1]}, \bbeta_{[t-1]}), 1\Big\}\nonumber\\
&\le\sum_{t=1}^T\frac{\gamma}{\beta_t}\cdot\frac{(f_t(z_t)-f_*(z_t))^2}{\cD_{\cF}^2(z_t, z_{[t-1]}; \bbeta_{[t-1]})}+\frac1{4\gamma}\sum_{t=1}^T\min\Big\{\beta_t\cD_{\cF}^2(z_t; z_{[t-1]}, \bbeta_{[t-1]}), 1\Big\}\nonumber\\
&\le\sum_{t=1}^T\frac{\gamma}{\beta_t}\bigg(\lambda+\sum_{s=1}^{t-1}\beta_s(f_t(z_s)-f_*(z_s))^2\bigg)+\frac{\dim(\cF, Z, \bbeta)}{4\gamma}\nonumber\\
&\le\sum_{t=1}^T\frac{\gamma}{\beta_t}\sum_{s=1}^{t-1}\beta_s(f_t(z_s)-f_*(z_s))^2+\gamma\lambda\sum_{t=1}^T\frac1{\beta_t}+\frac{\dim_{\epsilon, T}(\cF)}{4\gamma}\label{eq:small_D2}
\end{align}
where the first inequality holds because $z\ind[z\le 1]\le\min\{z, 1\}$, the second inequality holds due to AM-GM inequality, the third inequality holds due to the definition of $\cD_{\cF}^2(z_t, z_{[t-1]}, \bbeta_{[t-1]})$ and the definition of $\dim(\cF, Z, \bbeta)$, and the last inequality holds because $\dim_{\epsilon, T}(\cF)=\sup_{Z, \bbeta\le\epsilon}\dim(\cF, Z, \bbeta)$. The term $I_2$ satisfies
\begin{align}
I_2\le\sum_{t=1}^T\min\Big\{\beta_t\cD_{\cF}^2(z_t; z_{[t-1]}, \bbeta_{[t-1]}), 1\Big\}=\dim(\cF, Z, \bbeta)\le\dim_{\epsilon, T}(\cF),\label{eq:large_D2}
\end{align}
where the first inequality holds because $f_t(z_t)-f_*(z_t)\le1$, the equality holds due to the definition of $\dim(\cF, Z, \bbeta)$, and the last inequality holds because because $\dim_{\epsilon, T}(\cF)=\sup_{Z, \bbeta\le\epsilon}\dim(\cF, Z, \bbeta)$. Plugging \eqref{eq:small_D2} into \eqref{eq:large_D2}, we have
\begin{align*}
\sum_{t=1}^T(f_t(z_t)-f_*(z_t))\le\sum_{t=1}^T\frac{\gamma}{\beta_t}\sum_{s=1}^{t-1}\beta_s(f_t(z_s)-f_*(z_s))^2+\gamma\lambda\sum_{t=1}^T\frac1{\beta_t}+\bigg(1+\frac1{4\gamma}\bigg)\dim_{\epsilon, T}(\cF).
\end{align*}

\section{Proof of Main Theorem}\label{section:proof_variance_upper_bound}

In this section, we first provide a more general version of Theorem \ref{theorem:regret_upper_bound}:
\begin{theorem}\label{theorem:general}
Define
\begin{gather}
Z_t=-\EE_{S_{t-1}, x_t}\log\EE_{\tilde f\sim p_0}\exp\bigg(-\sum_{s=1}^{t-1}\bar\sigma_s^{-2}\Delta L(\tilde f, x_s, a_s, r_s)+\frac{c\sqrt{\Lambda_t}}{\bar\sigma_t^2}\FG_t(\tilde f)\bigg)\label{eq:def_Zt}\\
Z=1\vee\sup_{\{\bar\sigma_t\}_{t\in[T]}}\max_{t\in[T]}Z_t.\label{eq:def_Z}
\end{gather}
Suppose that the parameters are set as in \eqref{eq:parameters}, and the hyperparameters are
\begin{align*}
\lambda=1,\quad\epsilon=\alpha^{-2},\quad c=2\sqrt{Z/\dc_{\lambda, \epsilon, T}(\cF)}
\end{align*}
Then the total regret satisfies
\begin{align*}
&\EE[\Regret(T)]\le\frac94\sqrt{(\alpha^2T+\Lambda)Z\dc_{\lambda, \epsilon, T}(\cF)}+\dc_{\lambda, \epsilon, T}(\cF).
\end{align*}
\end{theorem}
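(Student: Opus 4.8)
The plan is to follow the ``prioritize-then-regularize'' route sketched in Section~\ref{section:proof_sketch}: first produce a per-round inequality that is completely free of any expectation of an exponential, and then feed it into the generalized decoupling coefficient \eqref{eq:def_gen_DC}. I begin by decomposing the instantaneous regret as $\EE[f_*(x_t,a_t^*)-f_*(x_t,a_t)]=\EE[f_t(x_t,a_t)-f_*(x_t,a_t)]-\EE[\FG_t(f_t)]$, which is valid because $a_t$ maximizes $f_t(x_t,\cdot)$. The central object to establish is the per-round bound
\[
Z\ \ge\ \EE_{S_{t-1},x_t}\EE_{\tilde f\sim p_t}\Big[\tfrac12\sum_{s=1}^{t-1}\bar\sigma_s^{-2}\LS_s(\tilde f)-\lambda_t\FG_t(\tilde f)\Big].
\]
To derive it I would start from the martingale identity $\EE_{S_{t-1}}\exp(\sum_{s<t}\xi_s(\tilde f))=1$ of Lemma~\ref{lemma:Exp_exp=1}, where $\xi_s$ subtracts the conditional log-MGF of $-\bar\sigma_s^{-2}\Delta L$; taking the prior expectation and applying Jensen's inequality gives \eqref{eq:ind_Exp_exp=1}. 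I then apply the Donsker--Varadhan duality of Lemma~\ref{lemma:KL_reg} twice: once as the variational lower bound \eqref{eq:KL_reg_bound} evaluated at the posterior $p_t$, and once as the optimality inequality \eqref{eq:KL_reg_optimal}. In the general (infinite-class) case I would not route through $\KL(\delta_{f_*}\|p_0)=\log|\cF|$ as in the finite case but instead recognize the resulting normalizing constant as $Z_t$ from \eqref{eq:def_Zt} and bound $Z_t\le Z$ via \eqref{eq:def_Z}. The two $\KL$ terms cancel against each other and the $\EE_{\tilde f\sim p_t}\sum\bar\sigma_s^{-2}\Delta L$ terms cancel as well, leaving only the conditional log-MGF of $\Delta L$ and the feel-good term, exactly as in the first line of \eqref{eq:proof_sketch_main_ineq}.

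To finish the per-round inequality I would invoke Assumption~\ref{assumption:subgaussian_noise}: writing $\Delta L=-2\epsilon_s\delta_s+\delta_s^2$ with $\delta_s=\tilde f(x_s,a_s)-f_*(x_s,a_s)$, the subgaussian MGF bound together with $\bar\sigma_s^2\ge\sigma_s^2$ (which forces $1-\sigma_s^2\bar\sigma_s^{-2}/2\ge 1/2$) yields $-\log\EE[\exp(-\bar\sigma_s^{-2}\Delta L)\mid\cG_s]\ge\tfrac12\bar\sigma_s^{-2}\LS_s(\tilde f)$, giving the displayed bound. Next I would divide both sides by $\lambda_t=c\sqrt{\Lambda_t}/\bar\sigma_t^2$ and use $\Lambda_t\le\Lambda_T$ to replace $\sqrt{\Lambda_t}$ by $\sqrt{\Lambda_T}$ in the quadratic term, obtaining a uniform lower bound on $\EE[\FG_t(\tilde f)]$ in terms of $\bar\sigma_t^2\sum_{s<t}\bar\sigma_s^{-2}\EE[\LS_s(\tilde f)]$ and $(\bar\sigma_t^2/\sqrt{\Lambda_t})Z$.

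I would then apply the generalized decoupling coefficient \eqref{eq:def_gen_DC} with $\beta_t=\bar\sigma_t^{-2}$; since that inequality holds for every realization of the sampled sequence $\{f_t\}$, I can take expectations so that $(f_t(z_s)-f_*(z_s))^2$ becomes $\EE_{\tilde f\sim p_t}\LS_s(\tilde f)$. This bounds $\sum_t\EE[f_t(x_t,a_t)-f_*(x_t,a_t)]$ by precisely the same quadratic term, plus $\gamma\lambda\sum_t\bar\sigma_t^2=\gamma\lambda\Lambda_T$ and $(1+\tfrac1{4\gamma})\dc$. Substituting the rescaled per-round bound, summing over $t$, and using Lemma~\ref{lemma:sum_1/gamma_t} ($\sum_t\bar\sigma_t^2/\sqrt{\Lambda_t}\le 2\sqrt{\Lambda_T}$), the feel-good terms coming from the regret decomposition and from the decoupling bound can be matched and cancelled by choosing $\gamma=1/(2c\sqrt{\Lambda_T})$, which zeroes out the coefficient $2\gamma c\sqrt{\Lambda_T}-1$. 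Finally, plugging in $c=2\sqrt{Z/\dc_{\lambda,\epsilon,T}(\cF)}$ and $\Lambda_T\le\alpha^2T+\Lambda$ makes the two dominant terms each equal $\sqrt{(\alpha^2T+\Lambda)Z\dc_{\lambda,\epsilon,T}(\cF)}$, while the residual $\gamma\lambda\Lambda_T$ term contributes at most $\tfrac14$ of the same quantity (using $Z\ge1$) and the ``$1\cdot\dc$'' survives, summing to the claimed $\tfrac94\sqrt{(\alpha^2T+\Lambda)Z\dc_{\lambda,\epsilon,T}(\cF)}+\dc_{\lambda,\epsilon,T}(\cF)$.

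The step I expect to be the main obstacle is the per-round inequality: I must ensure the double application of Donsker--Varadhan cancels both $\KL$ divergences cleanly and that the dependence between $a_t$ and $f_t$ is correctly carried through the decoupling coefficient, while never needing to control $\EE_{\tilde f\sim p_t}\exp(\cdots)$ --- this is exactly the quantity whose intractability makes Type~A in \eqref{eq:posterior_FGTS_A} fail, and avoiding it is what makes Type~B work. A secondary subtlety is that $\gamma$ is selected as a function of $\Lambda_T$ and $c$, so I should verify this choice is admissible in \eqref{eq:def_gen_DC} and that the cancellation of the feel-good term is exact rather than merely up to sign.
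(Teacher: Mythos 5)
Your proposal is correct and follows essentially the same route as the paper's own proof: the identical regret decomposition fed into the generalized decoupling coefficient with $\beta_t=\bar\sigma_t^{-2}$ and $\gamma=1/(2c\sqrt{\Lambda_T})$, the per-round inequality of Lemma \ref{lemma:main} derived via Lemma \ref{lemma:Exp_exp=1}, Jensen, and a double application of Donsker--Varadhan with the subgaussian bound \eqref{eq:use_subgaussian}, and the final assembly using Lemma \ref{lemma:sum_1/gamma_t} and $c=2\sqrt{Z/\dc_{\lambda,\epsilon,T}(\cF)}$. Your constant accounting ($2+\tfrac14$ contributions of $\sqrt{Z\dc_{\lambda,\epsilon,T}(\cF)\Lambda_T}$ plus the surviving $\dc_{\lambda,\epsilon,T}(\cF)$, then $\Lambda_T\le\alpha^2T+\Lambda$) reproduces the paper's $\tfrac94$ bound exactly.
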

To prove Theorem \ref{theorem:general}, we need the following lemma:
\begin{lemma}\label{lemma:main}
Under the conditions of Theorem \ref{theorem:general}, the following inequality holds:
\begin{gather*}
\EE_{S_{t-1}, x_t}\EE_{\tilde f\sim p_t}\bigg[\frac{\bar\sigma_t^2}{2c\sqrt{\Lambda}}\sum_{s=1}^{t-1}\bar\sigma_s^{-2}\LS_s(\tilde f)-\FG_t(\tilde f)\bigg]\le\frac{\bar\sigma_t^2}{c\sqrt{\Lambda_t}}Z_t.
\end{gather*}
\end{lemma}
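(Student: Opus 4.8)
The plan is to reduce the claim to the single ``master inequality''
\begin{align*}
Z_t\ge\EE_{S_{t-1}, x_t}\EE_{\tilde f\sim p_t}\bigg[\sum_{s=1}^{t-1}\frac{\bar\sigma_s^{-2}}{2}\LS_s(\tilde f)-\lambda_t\FG_t(\tilde f)\bigg],
\end{align*}
which is the general-$\cF$ counterpart of \eqref{eq:proof_sketch_main_ineq} with $Z_t$ replacing $\log|\cF|$. Granting this, dividing by $\lambda_t=c\sqrt{\Lambda_t}/\bar\sigma_t^2>0$ rewrites it as
\begin{align*}
\frac{\bar\sigma_t^2}{c\sqrt{\Lambda_t}}Z_t\ge\EE_{S_{t-1}, x_t}\EE_{\tilde f\sim p_t}\bigg[\frac{\bar\sigma_t^2}{2c\sqrt{\Lambda_t}}\sum_{s=1}^{t-1}\bar\sigma_s^{-2}\LS_s(\tilde f)-\FG_t(\tilde f)\bigg].
\end{align*}
Since $\LS_s(\tilde f)\ge0$ and $\Lambda_t\le\Lambda$, replacing $\sqrt{\Lambda_t}$ by the larger $\sqrt{\Lambda}$ in the denominator of the quadratic coefficient only decreases the right-hand side, so the displayed bound dominates the lemma's claim and the lemma follows. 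All the work is therefore in the master inequality.

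The central difficulty is that the posterior $p_t$ depends on the rewards $r_1,\dots,r_{t-1}$, so one cannot integrate out the noise $\epsilon_s$ after drawing $\tilde f\sim p_t$. I would resolve this by combining two uses of Donsker--Varadhan duality (Lemma \ref{lemma:KL_reg}), following the route in Section \ref{section:proof_sketch}. First, for \emph{fixed} $\tilde f$ the tilted increments $\xi_s$ satisfy $\EE_{S_{t-1}}\exp(\sum_{s=1}^{t-1}\xi_s)=1$ (Lemma \ref{lemma:Exp_exp=1}); averaging over $\tilde f\sim p_0$, applying Jensen to pull $\log$ outside $\EE_{S_{t-1},x_t}$, and lower-bounding the resulting log-partition by Lemma \ref{lemma:KL_reg} with $P=p_t$ gives
\begin{align*}
0\ge\EE_{S_{t-1}, x_t}\bigg[\EE_{\tilde f\sim p_t}\Big[-\sum_{s=1}^{t-1}\eta_s\Delta L-\sum_{s=1}^{t-1}\log\EE[\exp(-\eta_s\Delta L)|\cG_s]\Big]-\KL(p_t||p_0)\bigg].
\end{align*}
Second, since $p_t\propto p_0\exp(-\sum_s\eta_s\Delta L+\lambda_t\FG_t)$ is precisely the Gibbs minimizer, the equality case of Lemma \ref{lemma:KL_reg} yields
\begin{align*}
Z_t=\EE_{S_{t-1}, x_t}\bigg[\EE_{\tilde f\sim p_t}\Big[\sum_{s=1}^{t-1}\eta_s\Delta L-\lambda_t\FG_t\Big]+\KL(p_t||p_0)\bigg].
\end{align*}
Adding the two relations, the $\KL(p_t||p_0)$ terms and the $\sum_s\eta_s\Delta L$ terms cancel exactly, leaving
\begin{align*}
Z_t\ge\EE_{S_{t-1}, x_t}\EE_{\tilde f\sim p_t}\bigg[-\sum_{s=1}^{t-1}\log\EE[\exp(-\eta_s\Delta L)|\cG_s]-\lambda_t\FG_t(\tilde f)\bigg].
\end{align*}
I regard this cancellation as the crux: it never invokes the point mass $\delta_{f_*}$, which is exactly why $Z_t$ rather than $\log|\cF|$ appears and why the bound survives for infinite $\cF$.

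To finish, I would bound $-\log\EE[\exp(-\eta_s\Delta L)|\cG_s]$ via the noise model. Writing $\delta_s=\tilde f(x_s,a_s)-f_*(x_s,a_s)$ and $r_s=f_*(x_s,a_s)+\epsilon_s$ gives $\Delta L=\delta_s^2-2\epsilon_s\delta_s=\LS_s(\tilde f)-2\epsilon_s\delta_s$, so Assumption \ref{assumption:subgaussian_noise} with $\lambda=2\eta_s\delta_s$ yields $\log\EE[\exp(-\eta_s\Delta L)|\cG_s]\le-\eta_s\LS_s(\tilde f)(1-\sigma_s^2\eta_s/2)$. With $\eta_s=\bar\sigma_s^{-2}$ and $\bar\sigma_s^2\ge\sigma_s^2$ the factor $1-\sigma_s^2\eta_s/2$ is at least $1/2$, so $-\log\EE[\exp(-\eta_s\Delta L)|\cG_s]\ge\tfrac12\bar\sigma_s^{-2}\LS_s(\tilde f)$; substituting this into the last display produces the master inequality. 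The only delicate points I anticipate are fixing the direction of each duality and verifying that the posterior step is a genuine \emph{equality} (Gibbs optimality) rather than merely an inequality, since the cancellation depends on it; the subgaussian estimate and the $\Lambda_t\le\Lambda$ monotonicity are routine.
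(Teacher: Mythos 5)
Your proposal is correct and follows essentially the same route as the paper's own proof: Lemma \ref{lemma:Exp_exp=1} combined with Jensen's inequality, then Donsker--Varadhan duality applied twice (once as an inequality with $P=p_t$, once as the Gibbs-optimality equality, which is exactly the paper's \eqref{eq:KL_ineq} and \eqref{eq:KL_opt}), the cancellation of the $\KL$ and $\sum_s\eta_s\Delta L$ terms, the subgaussian estimate matching \eqref{eq:use_subgaussian}, and finally division by $\lambda_t$ together with $\Lambda_t\le\Lambda$. The only cosmetic difference is that you state the master inequality up front and derive it afterward, whereas the paper derives it in-line; the substance is identical.
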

We show the proof of Lemma \ref{lemma:main} in Appendix \ref{section:proof_lemmas}. We now provide the proof of Theorem \ref{theorem:general}:

\begin{proof}[Proof of Theorem \ref{theorem:general}]
The regret can be decomposed as
\begin{align}
&\Regret(t)=\sum_{t=1}^T[f_*(x_t, a_t^*)-f_*(x_t, a_t)]=\sum_{t=1}^T[f_t(x_t, a_t)-f_*(x_t, a_t)-\FG_t(f_t)]\nonumber\\
&\le\sum_{t=1}^T\bigg[\frac{\gamma}{\beta_t}\sum_{s=1}^{t-1}\beta_s\LS_s(f_t)-\FG_t(f_t)\bigg]+\gamma\lambda\sum_{t=1}^T\frac1{\beta_t}+\Big(1+\frac1{4\gamma}\Big)\dc_{\lambda, \epsilon, T}(\cF),\label{eq:regret_decomp}
\end{align}
where the equality holds due to the optimality of $a_t$, and the inequality holds due to the definition of the generalized decoupling coefficient. Taking the expectation on both sides of \eqref{eq:regret_decomp}, we have
\begin{align*}
&\EE[\Regret(T)]\le\sum_{t=1}^T\EE\bigg[\frac{\gamma}{\beta_t}\sum_{s=1}^{t-1}\beta_s\LS_s(f_t)-\FG_t(f_t)\bigg]+\gamma\lambda\sum_{t=1}^T\frac1{\beta_t}+\Big(1+\frac1{4\gamma}\Big)\dc_{\lambda, \epsilon, T}(\cF)\\
&=\sum_{t=1}^T\EE_{S_{t-1}, x_t}\EE_{\tilde f\sim p_t}\bigg[\frac{\gamma}{\beta_t}\sum_{s=1}^{t-1}\beta_s\LS_s(\tilde f)-\FG_t(\tilde f)\bigg]+\gamma\lambda\sum_{t=1}^T\frac1{\beta_t}+\Big(1+\frac1{4\gamma}\Big)\dc_{\lambda, \epsilon, T}(\cF),
\end{align*}
where the equality holds due to the double expectation theorem and because neither $\LS_s(f_t)$ nor $\FG_t(f_t)$ explicitly contain $a_t$. 
By selecting $\beta_t=\bar\sigma_t^{-2}$ and $\gamma=1/(2c\sqrt{\Lambda_T})$, we can use Lemma \ref{lemma:main} to further bound the regret:
\begin{align*}
\EE[\Regret(T)]&\le\frac{\lambda\sqrt{\Lambda_T}}{2c}+\Big(1+\frac{c\sqrt{\Lambda_T}}{2}\Big)\dc_{\lambda, \epsilon, T}(\cF)+\sum_{t=1}^T\frac{\bar\sigma_t^2}{c\sqrt{\Lambda_t}}Z_t\\
&\le\frac{\lambda\sqrt{\Lambda_T}}{2c}+\Big(1+\frac{c\sqrt{\Lambda_T}}{2}\Big)\dc_{\lambda, \epsilon, T}(\cF)+Z\sum_{t=1}^T\frac{\bar\sigma_t^2}{c\sqrt{\Lambda_t}}\\
&\le\frac{\lambda\sqrt{\Lambda_T}}{2c}+\Big(1+\frac{c\sqrt{\Lambda_T}}{2}\Big)\dc_{\lambda, \epsilon, T}(\cF)+2\frac{Z\sqrt{\Lambda_T}}{c}
\end{align*}
where the second inequality holds due to the definition of $Z$, and the last inequality holds due to Lemma \ref{lemma:sum_1/gamma_t}. Plugging in $c=2\sqrt{Z/\dc_{\lambda, \epsilon, T}(\cF)}$, we have
\begin{align*}
\EE[\Regret(T)]&\le\frac{\lambda}{4}\sqrt{\frac{\dc_{\lambda, \epsilon, T}(\cF)\Lambda_T}{Z}}+2\sqrt{Z\dc_{\lambda, \epsilon, T}(\cF)\Lambda_T}+\dc_{\lambda, \epsilon, T}(\cF)\\
&\le\frac94\sqrt{Z\dc_{\lambda, \epsilon, T}(\cF)\Lambda_T}+\dc_{\lambda, \epsilon, T}(\cF)\\
&\le\frac94\sqrt{(\alpha^2T+\Lambda)Z\dc_{\lambda, \epsilon, T}(\cF)}+\dc_{\lambda, \epsilon, T}(\cF),
\end{align*}
where the second inequality holds because $\lambda=1$ and $Z\ge1$, and the last inequality holds because $\Lambda_T=\sum_{t=1}^T\max\{\sigma_t^2, \alpha^2\}\le\sum_{t=1}^T(\sigma_t^2+\alpha^2)$.
\end{proof}

In order to prove Theorem \ref{theorem:regret_upper_bound} from Theorem \ref{theorem:general}, we note that $Z\le\log|\cF|$ using the argument in Section \ref{section:proof_sketch}. By selecting $\alpha=1/\sqrt{T}$, we can prove Theorem \ref{theorem:general}, and by selecting $\alpha$ to be a even smaller number, we can prove further shave the term $\alpha^2T+\Lambda$.

In order to deal with the infinite function class, we need the following lemma to characterize $Z$:
\begin{lemma}\label{lemma:Z}
Suppose that $\sigma_t$ is uniformly bounded by $\sigma$, and the hyperparameter $\alpha$ satisfies $\alpha\le1$. Define
\begin{align*}
\delta=\frac{\alpha^2}{2T}\min\{1, \sigma^{-1}\}.
\end{align*}
Then $Z$ satisfies
\begin{align*}
Z\le\log\frac1{p_0(\cF_\delta(f_*))}+\frac\alpha2+\frac{c}{\sqrt T}.
\end{align*}
For the linear reward function class where $\Theta$ is the unit ball, we have $-\log p_0(\cF_\delta(f_*))=\tilde\cO(d)$ according to \citet{zhang2022feel}. Therefore, the regret of \FGTSVA~in linear contextual bandits is $\tilde\cO(d\sqrt{\Lambda}+d)$, even if $\cF$ is not the $\epsilon$-net. \footnote{Although $Z_T$ has additional terms in $\alpha$ and $c$, they can be shaved by carefully choosing hypermeters in Theorem \ref{theorem:general}.}
\end{lemma}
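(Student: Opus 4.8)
The plan is to control each $Z_t$ by lower-bounding the inner prior expectation of $\exp(G_t(\tilde f))$, where I abbreviate $G_t(\tilde f)=-\sum_{s=1}^{t-1}\bar\sigma_s^{-2}\Delta L(\tilde f,x_s,a_s,r_s)+\tfrac{c\sqrt{\Lambda_t}}{\bar\sigma_t^2}\FG_t(\tilde f)$, the exponent appearing in \eqref{eq:def_Zt}. First I would restrict the prior to the ball $\cF_\delta(f_*)$ (the functions with $|f(z)-f_*(z)|\le\delta$ uniformly), write $p_0|_{\cF_\delta(f_*)}$ for the conditional prior, and apply Jensen's inequality inside the logarithm:
\begin{align*}
-\log\EE_{\tilde f\sim p_0}\exp(G_t(\tilde f))\le\log\frac1{p_0(\cF_\delta(f_*))}+\EE_{\tilde f\sim p_0|_{\cF_\delta(f_*)}}[-G_t(\tilde f)].
\end{align*}
Taking $\EE_{S_{t-1},x_t}$ reduces the problem to showing that $\EE_{S_{t-1},x_t}\EE_{\tilde f\sim p_0|_{\cF_\delta(f_*)}}[-G_t(\tilde f)]\le\tfrac\alpha2+\tfrac{c}{\sqrt T}$, so that the remaining $\log(1/p_0(\cF_\delta(f_*)))$ matches the claimed leading term.

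Second, I would expand the loss difference. Setting $g_s=\tilde f(x_s,a_s)-f_*(x_s,a_s)$ and $\epsilon_s=r_s-f_*(x_s,a_s)$, a one-line computation gives $\Delta L(\tilde f,x_s,a_s,r_s)=g_s^2-2\epsilon_s g_s$, with $|g_s|\le\delta$ on the ball. The key observation is that the cross term vanishes after taking expectations: since $\EE[\epsilon_s\mid\cG_s]=0$ and $\EE_{\tilde f\sim p_0|_{\cF_\delta(f_*)}}[g_s]$ is $\cG_s$-measurable, the tower rule kills $\EE_{S_{t-1}}\EE_{\tilde f}[-2\epsilon_s g_s]$, leaving only $\EE_{\tilde f}[g_s^2]\le\delta^2$. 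For the feel-good term, $\max_{a}\tilde f(x_t,a)\ge\tilde f(x_t,a_t^*)\ge f_*(x_t,a_t^*)-\delta$ on the ball, so $\FG_t(\tilde f)\ge-\delta$ and hence $-\EE_{\tilde f}\FG_t(\tilde f)\le\delta$. Collecting these gives
\begin{align*}
\EE_{S_{t-1},x_t}\EE_{\tilde f\sim p_0|_{\cF_\delta(f_*)}}[-G_t(\tilde f)]\le\sum_{s=1}^{t-1}\bar\sigma_s^{-2}\delta^2+\frac{c\sqrt{\Lambda_t}}{\bar\sigma_t^2}\delta.
\end{align*}

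Third, I would substitute the chosen $\delta=\tfrac{\alpha^2}{2T}\min\{1,\sigma^{-1}\}$. Using $\bar\sigma_s\ge\alpha$ bounds the square-loss sum by $T\alpha^{-2}\delta^2=\tfrac{\alpha^2}{4T}\min\{1,\sigma^{-1}\}^2\le\tfrac\alpha2$. For the feel-good term I would use $\bar\sigma_t\ge\alpha$ together with $\Lambda_t\le\Lambda_T\le T\max\{\sigma^2,1\}$ (valid since $\alpha\le1$ and $\sigma_s\le\sigma$), so that $\tfrac{c\sqrt{\Lambda_t}}{\bar\sigma_t^2}\delta\le\tfrac{c\sqrt T\max\{\sigma,1\}}{\alpha^2}\cdot\tfrac{\alpha^2}{2T}\min\{1,\sigma^{-1}\}=\tfrac{c}{2\sqrt T}$, where the last equality is the identity $\max\{\sigma,1\}\cdot\min\{1,\sigma^{-1}\}=1$ that fixes the exact form of $\delta$. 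This yields $Z_t\le\log\frac1{p_0(\cF_\delta(f_*))}+\tfrac\alpha2+\tfrac{c}{2\sqrt T}$ uniformly in $t$ and in every admissible sequence $\{\bar\sigma_t\}$ with $\sigma_t\le\sigma$; taking the supremum over such sequences and the maximum with $1$ (absorbed since the right-hand side exceeds $1$ for the tiny $\delta$ considered) gives the claim.

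The hard part is the bookkeeping in the final step: the single parameter $\delta$ must simultaneously drive the square-loss contribution down to $\cO(\alpha)$ and the feel-good contribution down to $\cO(c/\sqrt T)$, and this only closes because of the exact cancellation $\max\{\sigma,1\}\min\{1,\sigma^{-1}\}=1$ together with the estimate $\sqrt{\Lambda_t}/\bar\sigma_t^2\le\sqrt T\max\{\sigma,1\}/\alpha^2$. A secondary subtlety, easy to overlook, is justifying that the noise cross-term genuinely disappears; this is precisely where averaging over the ball (Jensen) rather than taking a crude supremum over $f$ pays off, since $\EE_{\tilde f}[g_s]$ is then deterministic given $\cG_s$ and the martingale structure of $\epsilon_s$ applies, avoiding any $\EE|\epsilon_s|$ loss that would otherwise inflate the $\alpha$-term.
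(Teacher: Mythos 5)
Your proof is correct, and its skeleton matches the paper's: restrict the prior expectation in \eqref{eq:def_Zt} to the sup-norm ball $\cF_\delta(f_*)$, lower-bound the exponent on that ball, and choose $\delta=\tfrac{\alpha^2}{2T}\min\{1,\sigma^{-1}\}$ so that the identity $\max\{\sigma,1\}\min\{1,\sigma^{-1}\}=1$ closes both the square-loss and the feel-good budgets. The one genuine difference is your treatment of the noise cross-term in $\Delta L(\tilde f,x_s,a_s,r_s)=g_s^2-2\epsilon_s g_s$. The paper bounds it pointwise, $2\epsilon_s g_s\ge-2|\epsilon_s|\delta$, and then pays in expectation via $\EE|\epsilon_s|\le\sqrt{\EE\epsilon_s^2}\le\bar\sigma_s/2$ (a consequence of Assumption \ref{assumption:subgaussian_noise}); this creates the $\sum_s\delta/\bar\sigma_s=\cO(T\delta/\alpha)$ term, which is precisely what forces $\delta=\cO(\alpha^2/T)$ and produces the $\alpha/2$ in the statement. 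You instead average over the conditional prior $p_0|_{\cF_\delta(f_*)}$ (Jensen inside the logarithm) and observe that $\EE_{\tilde f}[g_s]$ is $\cG_s$-measurable while $\EE[\epsilon_s|\cG_s]=0$, so the cross-term vanishes exactly under $\EE_{S_{t-1},x_t}$. This is a local but real refinement: your surviving error terms are only $\cO(T\delta^2/\alpha^2)$ and $\cO(c\sqrt T\,\delta\max\{\sigma,1\}/\alpha^2)$, yielding the slightly sharper bound $Z_t\le\log\tfrac1{p_0(\cF_\delta(f_*))}+\tfrac{\alpha^2}{4T}+\tfrac{c}{2\sqrt T}$, and it would even tolerate a larger $\delta$ (hence more prior mass in the leading term), whereas the paper's constants are dominated by the $\EE|\epsilon_s|$ contribution. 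One caveat you share with the paper: since $Z=1\vee\sup\max_t Z_t$, the claimed inequality implicitly requires the right-hand side to be at least $1$; you at least flag this (RHS exceeds $1$ for the tiny $\delta$ in question), while the paper's proof passes over it silently, so it is not a defect of your argument relative to the original.
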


\section{Proof of Lemmas in Appendix \ref{section:proof_variance_upper_bound}}

\subsection{Proof of Lemma \ref{lemma:main}}\label{section:proof_lemmas}
\begin{proof}
We aim to apply Lemma~\ref{lemma:Exp_exp=1}, so for any $s\in[t-1]$, we define $F_s(\tilde f, x_s, a_s, r_s)=\bar\sigma_s^{-2}\Delta L(\tilde f, x_s, a_s, r_s)$, then by Lemma \ref{lemma:Exp_exp=1}, we have
\begin{align}
0&=-\log\EE_{\tilde f\sim p_0}\EE_{S_{t-1}, x_t}\exp\bigg(\sum_{s=1}^{t-1}\xi_s(\tilde f, x_s, a_s, r_s)\bigg)\nonumber\\
&\le-\EE_{S_{t-1}, x_t}\log\EE_{\tilde f\sim p_0}\exp\bigg(\sum_{s=1}^{t-1}\xi_s(\tilde f, x_s, a_s, r_s)\bigg),\label{eq:jensen}
\end{align}
where the inequality holds due to the Jensen's inequality. We then use Lemma \ref{lemma:KL_reg} twice:
\begin{align}
&-\log\EE_{\tilde f\sim p_0}\exp\bigg(\sum_{s=1}^{t-1}\xi_s(\tilde f, x_s, a_s, r_s)\bigg)\le\EE_{\tilde f\sim p_t}\bigg[-\sum_{s=1}^{t-1}\xi_s(\tilde f, x_s, a_s, r_s)\bigg]+\KL(p_t||p_0),\label{eq:KL_ineq}\\
&-\log\EE_{\tilde f\sim p_0}\exp\bigg(-\sum_{s=1}^{t-1}\bar\sigma_s^{-2}\Delta L(\tilde f, x_s, a_s, r_s)+\lambda_t\FG_t(\tilde f)\bigg)\nonumber\\
&=\EE_{\tilde f\sim p_t}\bigg[\sum_{s=1}^{t-1}\bar\sigma_s^{-2}\Delta L(\tilde f, x_s, a_s, r_s)-\lambda_t\FG_t(\tilde f)\bigg]+\KL(p_t||p_0).\label{eq:KL_opt}
\end{align}
Furthermore, since
\begin{align*}
F_s(\tilde f, x_s, a_s, r_s)&=\bar\sigma_s^{-2}[(r_s-\tilde f(x_s, a_s))^2-(r_s-f_*(x_s, a_s))^2]\\
&=\bar\sigma_s^{-2}[(\epsilon_s+f_*(x_s, a_s)-\tilde f(x_s, a_s))^2-\epsilon_s^2]\\
&=\bar\sigma_s^{-2}\LS_s(\tilde f)-2\epsilon_s\bar\sigma_s^{-2}(\tilde f(x_s, a_s)-f_*(x_s, a_s)),
\end{align*}
we have
\begin{align}
&\log\EE[\exp(-F_s(\tilde f, x_s, a_s, r_s)|\cG_s)]\nonumber\\
&=-\bar\sigma_s^{-2}\Delta L_s(\tilde f)+\EE[\exp(2\epsilon_s\bar\sigma_s^{-2}(\tilde f(x_s, a_s)-f_*(x_s, a_s)))|\cG_t]\nonumber\\
&\le-\bar\sigma_s^{-2}\LS_s(\tilde f)+\sigma_s^2\bar\sigma_s^{-4}/2\LS_s(\tilde f)\nonumber\\
&\le-\bar\sigma_s^{-2}/2\cdot\LS_s(\tilde f),\label{eq:use_subgaussian}
\end{align}
where the first inequality holds due to Assumption \ref{assumption:subgaussian_noise}, and the second inequality holds because $\sigma_s\le\bar\sigma_s$. Plugging \eqref{eq:KL_ineq}, \eqref{eq:KL_opt}, and \eqref{eq:use_subgaussian} into \eqref{eq:jensen}, we have
\begin{align*}
0&\le\EE_{S_{t-1}, x_t}\bigg[\EE_{\tilde f\sim p_t}\bigg[-\sum_{s=1}^{t-1}\xi_s(\tilde f, x_s, a_s, r_s)\bigg]+\KL(p_t||p_0)\bigg]\\
&=\EE_{S_{t-1}, x_t}\bigg[\EE_{\tilde f\sim p_t}\bigg[\sum_{s=1}^{t-1}\Delta L(\tilde f, x_s, a_s, r_s)-\lambda_t\FG_t(\tilde f)\bigg]+\KL(p_t||p_0)\\
&\quad+\sum_{s=1}^{t-1}\EE_{\tilde f\sim p_t}\log\EE[\exp(-\Delta L(\tilde f, x_s, a_s, r_s))|\cG_t]+\lambda_t\EE_{\tilde f\sim p_t}\FG_t(\tilde f)\bigg]\\
&\le-\EE_{S_{t-1}, x_t}\log\EE_{\tilde f\sim p_0}\exp\bigg(-\sum_{s=1}^{t-1}\bar\sigma_s^{-2}\Delta L(\tilde f, x_s, a_s, r_s)+\lambda_t\FG_t(\tilde f)\bigg)\\
&\quad+\EE_{S_{t-1}, x_t}\EE_{\tilde f\sim p_t}\bigg[-\sum_{s=1}^{t-1}\frac{\bar\sigma_s^{-2}}2\LS_s(\tilde f)+\lambda_t\FG_t(\tilde f)\bigg].
\end{align*}
Rearranging terms and plugging in $\lambda_t=c\bar\sigma_t^{-2}\sqrt{\Lambda_t}$, we obtain
\begin{align*}
&\EE_{S_{t-1}, x_t}\EE_{\tilde f\sim p_t}\bigg[\frac{\bar\sigma_t^2}{2c\sqrt{\Lambda}}\sum_{s=1}^{t-1}\bar\sigma_s^{-2}\LS_s(\tilde f)-\FG_t(\tilde f)\bigg]\\
&\le\EE_{S_{t-1}, x_t}\EE_{\tilde f\sim p_t}\bigg[\frac{\bar\sigma_t^2}{2c\sqrt{\Lambda_t}}\sum_{s=1}^{t-1}\bar\sigma_s^{-2}\LS_s(\tilde f)-\FG_t(\tilde f)\bigg]\\
&\le\frac{-\bar\sigma_t^2}{c\sqrt{\Lambda_t}}\EE_{S_{t-1}, x_t}\log\EE_{\tilde f\sim p_0}\exp\bigg(-\sum_{s=1}^{t-1}\bar\sigma_s^{-2}\Delta L(\tilde f, x_s, a_s, r_s)+\frac{c\sqrt{\Lambda_t}}{\bar\sigma_t^2}\FG_t(\tilde f)\bigg).
\end{align*}
where the first inequality holds because $\Lambda_t\le\Lambda$.
\end{proof}

\subsection{Proof of Lemma \ref{lemma:Z}}

\begin{proof}[Proof of Lemma \ref{lemma:Z}]
We note that for any $f\in\cF_\delta(f_*)$, we have
\begin{align}
&-\sum_{s=1}^{t-1}\bar\sigma_s^{-2}\Delta L(f, x_s, a_s, r_s)+\frac{c\sqrt{\Lambda_t}}{\bar\sigma_t^2}\FG_t(f)\nonumber\\
&=\sum_{s=1}^{t-1}\bar\sigma_s^{-2}[2\epsilon_s(f(x_s, a_s)-f_*(x_s, a_s))-\LS_s(f)]+\frac{c\sqrt{\Lambda_t}}{\bar\sigma_t^2}\FG_t(f)\nonumber\\
&\ge-\sum_{s=1}^{t-1}\bar\sigma_s^{-2}(|\epsilon_s|\delta+\delta^2)-\frac{c\delta\sqrt{\Lambda_t}}{\bar\sigma_t^2}\label{eq:-base_lower_bound}
\end{align}
Therefore, $-Z_t$ can be lower bounded as
\begin{align*}
-Z_t&=\EE_{S_{t-1}, x_t}\log\EE_{\tilde f\sim p_0}\exp\bigg(-\sum_{s=1}^{t-1}\bar\sigma_s^{-2}\Delta L(\tilde f, x_s, a_s, r_s)+\frac{c\sqrt{\Lambda_t}}{\bar\sigma_t^2}\FG_t(\tilde f)\bigg)\\
&\ge\EE_{S_{t-1}, x_t}\log\bigg(p_0(\cF_\delta(f_*))\inf_{f\in\cF_\delta}\exp\bigg(-\sum_{s=1}^{t-1}\bar\sigma_s^{-2}\Delta L(f, x_s, a_s, r_s)+\frac{c\sqrt{\Lambda_t}}{\bar\sigma_t^2}\FG_t(f)\bigg)\bigg)\\
&\ge\log p_0(\cF_\delta(f_*))+\EE_{S_{t-1}, x_t}\bigg[-\sum_{s=1}^{t-1}\bar\sigma_s^{-2}(|\epsilon_s|\delta+\delta^2)-\frac{c\delta\sqrt{\Lambda_t}}{\bar\sigma_t^2}\bigg]\\
&\ge\log p_0(\cF_\delta(f_*))-\sum_{s=1}^{t-1}\Big(\frac{\delta}{2\bar\sigma_s}+\frac{\delta^2}{\bar\sigma_s^2}\Big)-\frac{c\delta\sqrt{\Lambda_t}}{\bar\sigma_t^2},
\end{align*}
where the first inequality holds because for any function $F(x)\ge0$, we have $\EE[F(x)]\ge\EE[F(x)\ind[x\in\cC]]\ge p(\cC)\EE[\inf_{x\in\cC}F(x)]$, the second inequality holds due to \eqref{eq:-base_lower_bound}, and the last inequality holds because $\EE|\epsilon_s|\le\sqrt{\EE\epsilon_s^2}\le\sqrt{\sigma_s^2/4}\le\bar\sigma_s/2$. By choosing $\delta=\alpha^2\min\{1/\sigma, 1\}/(2T)$, we have
\begin{align*}
Z_t\le\log\frac1{p_0(\cF_\delta(f_*))}+\frac{\alpha}{4}+\frac{\alpha^2}{4T}+\frac{c}{2\sqrt{T}}\le\log\log\frac1{p_0(\cF_\delta(f_*))}+\frac\alpha2+\frac{c}{\sqrt T},
\end{align*}
where the second inequality holds because $\alpha\le$ and $T\ge1$.
\end{proof}

\section{Auxiliary Lemmas}

\begin{lemma}\label{lemma:Exp_exp=1}
For any function $F_t: \cF\times\cX\times\cA\times\RR\to\RR$, define
\begin{align*}
\xi_t(\tilde f, x_t, a_t, r_t)&=-F_t(\tilde f, x_t, a_t, r_t)-\log\EE[\exp(-F_t(\tilde f, x_t, a_t, r_t))|\cG_t].
\end{align*}
Then for all $t$ and all $\tilde f\in\cF$, we have
\begin{align*}
\EE_{S_t}\exp\bigg(\sum_{s=1}^t\xi_s(\tilde f, x_s, a_s, r_s)\bigg)=1.
\end{align*}
\end{lemma}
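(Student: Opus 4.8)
The plan is to recognize that $M_t \coloneqq \exp\big(\sum_{s=1}^t \xi_s(\tilde f, x_s, a_s, r_s)\big)$ is an exponential martingale with respect to the filtration structure, and to establish the identity by a one-step tower-property computation followed by a downward induction on $t$. The function $\tilde f$ is fixed throughout, so the only randomness in each $\xi_s$ enters through $(x_s, a_s, r_s)$.

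First I would compute $\EE[\exp(\xi_s)\mid\cG_s]$ directly from the definition of $\xi_s$. Since $\log\EE[\exp(-F_s)\mid\cG_s]$ is $\cG_s$-measurable, it factors out of the exponential:
\begin{align*}
\exp\big(\xi_s(\tilde f, x_s, a_s, r_s)\big)=\frac{\exp\big(-F_s(\tilde f, x_s, a_s, r_s)\big)}{\EE[\exp(-F_s(\tilde f, x_s, a_s, r_s))\mid\cG_s]}.
\end{align*}
Taking $\EE[\cdot\mid\cG_s]$, the denominator is deterministic under this conditioning and exactly cancels the conditional expectation of the numerator, giving $\EE[\exp(\xi_s)\mid\cG_s]=1$ for every $s$. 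This is the self-normalizing identity at the heart of the argument.

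Next I would run the induction. Writing $M_t=M_{t-1}\exp(\xi_t)$, I would show $\EE_{S_t}[M_t]=\EE_{S_{t-1}}[M_{t-1}]$ via the tower property over $\cG_t$:
\begin{align*}
\EE_{S_t}[M_t]=\EE\big[M_{t-1}\exp(\xi_t)\big]=\EE\big[M_{t-1}\,\EE[\exp(\xi_t)\mid\cG_t]\big]=\EE[M_{t-1}],
\end{align*}
where the middle equality pulls the $\cG_t$-measurable factor $M_{t-1}$ out of the inner conditional expectation and the last equality uses the cancellation established above. The base case $\EE[M_0]=1$ (empty product) then completes the downward induction and yields the claim.

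The one step that requires care is the measurability claim that $M_{t-1}$ is $\cG_t$-measurable, which is what licenses pulling it out of $\EE[\cdot\mid\cG_t]$. This follows from the filtration definitions in the preliminaries: $\cG_t$ contains all the randomness of $\cF_t$ except $r_t$, hence in particular it contains $r_1,\dots,r_{t-1}$ together with the pairs $x_1,a_1,\dots,x_t,a_t$. Since $M_{t-1}$ is a function only of the realized history $\{(x_s,a_s,r_s)\}_{s\le t-1}$, it is $\cG_t$-measurable. This is the crux that makes the exponential-martingale telescoping go through; once it is in place, the remaining computations are routine.
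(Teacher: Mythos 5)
Your proposal is correct and follows essentially the same argument as the paper's proof: the self-normalizing identity $\EE[\exp(\xi_t)\mid\cG_t]=1$ obtained by factoring the $\cG_t$-measurable normalizer out of the exponential, followed by induction via the tower property, using the $\cG_t$-measurability of the partial product $\exp\big(\sum_{s\le t-1}\xi_s\big)$. Framing the quantity as an exponential martingale is a cosmetic difference only; the mechanics are identical.
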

\begin{proof}
We prove the lemma by induction. The property holds trivially for $t=0$. Now suppose that the lemma holds for $t-1$, then note that
\begin{align}
&\EE[\exp(\xi_t(\tilde f, x_t, a_t, r_t))|\cG_t]\nonumber\\
&=\EE\Big[\exp\Big(-F_t(\tilde f, x_t, a_t, r_t)-\log\EE[\exp(-F_t(\tilde f, x_t, a_t, r_t))|\cG_{t}]\Big)\Big|\cG_{t}\Big]\nonumber\\
&=\EE\bigg[\frac{\exp(-F_t(x_t, a_t, r_t))}{\EE[\exp(-F_t(\tilde f, x_t, a_t, r_t))|\cG_{t}]}\bigg|\cG_{t}\bigg]=1,\label{eq:condEE_exp_1}
\end{align}
where the first equality holds due to the definition of $\xi_t$, and the last equality holds because the denominator belongs to $\cG_t$. We then have
\begin{align*}
&\EE_{S_t}\exp\bigg(\sum_{s=1}^t\xi_s(\tilde f, x_s, a_s, r_s)\bigg)\\
&=\EE\bigg[\exp\bigg(\sum_{s=1}^{t-1}\xi_s(\tilde f, x_s, a_s, r_s)\bigg)\cdot\EE[\exp(\xi_t(\tilde f, x_s, a_s, r_s))|\cG_t]\bigg]\\
&=\EE\bigg[\exp\bigg(\sum_{s=1}^{t-1}\xi_s(\tilde f, x_s, a_s, r_s)\bigg)\bigg]=1
\end{align*}
where the first equality holds due to the double expectation theorem and because $\xi_s(\tilde s, x_s, a_s, r_s)\in\cG_t$ for all $s\in[t-1]$, the second equality holds due to \eqref{eq:condEE_exp_1}, and the last equality holds due to the induction hypothesis. Combining all the above, the lemma holds for all $t$ by induction.
\end{proof}

\begin{lemma}\label{lemma:sum_1/gamma_t}
Let $\bar\sigma_t$ and $\Lambda_t$ be defined in \eqref{eq:parameters}. Then the following property holds:
\begin{align*}
\sum_{t=1}^T\frac{\bar\sigma_t^2}{\sqrt{\Lambda_t}}\le2\sqrt{\Lambda_T}.
\end{align*}
\end{lemma}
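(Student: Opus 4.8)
The plan is to recognize this as a telescoping sum obtained from a standard integral-comparison bound. Writing $\Lambda_0 \coloneqq 0$, the defining recursion $\Lambda_t = \sum_{s=1}^t \bar\sigma_s^2$ gives $\bar\sigma_t^2 = \Lambda_t - \Lambda_{t-1}$, so the left-hand side becomes $\sum_{t=1}^T (\Lambda_t - \Lambda_{t-1})/\sqrt{\Lambda_t}$. The whole argument then reduces to controlling each summand by a difference of square roots that telescopes.

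First I would establish the per-term inequality, which is the only step requiring any real thought:
\begin{align*}
\frac{\Lambda_t - \Lambda_{t-1}}{\sqrt{\Lambda_t}}\le 2\big(\sqrt{\Lambda_t}-\sqrt{\Lambda_{t-1}}\big).
\end{align*}
To see this I would rationalize the right-hand side,
\begin{align*}
\sqrt{\Lambda_t}-\sqrt{\Lambda_{t-1}}=\frac{\Lambda_t-\Lambda_{t-1}}{\sqrt{\Lambda_t}+\sqrt{\Lambda_{t-1}}},
\end{align*}
and then bound the denominator from above using monotonicity of the partial sums: since $\bar\sigma_s^2\ge 0$ we have $\Lambda_{t-1}\le\Lambda_t$, hence $\sqrt{\Lambda_t}+\sqrt{\Lambda_{t-1}}\le 2\sqrt{\Lambda_t}$. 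Combining these two facts yields $\sqrt{\Lambda_t}-\sqrt{\Lambda_{t-1}}\ge (\Lambda_t-\Lambda_{t-1})/(2\sqrt{\Lambda_t})$, which rearranges to the claimed per-term bound. (One should note $\Lambda_t>0$ for $t\ge 1$ because $\bar\sigma_t=\max\{\sigma_t,\alpha\}\ge\alpha>0$, so the division is well defined.)

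Finally I would sum the per-term inequality over $t\in[T]$ and telescope:
\begin{align*}
\sum_{t=1}^T\frac{\bar\sigma_t^2}{\sqrt{\Lambda_t}}\le 2\sum_{t=1}^T\big(\sqrt{\Lambda_t}-\sqrt{\Lambda_{t-1}}\big)=2\big(\sqrt{\Lambda_T}-\sqrt{\Lambda_0}\big)=2\sqrt{\Lambda_T},
\end{align*}
using $\Lambda_0=0$. I do not expect any genuine obstacle here; the entire content is the per-term comparison, and the rest is a mechanical telescope. The only points worth stating carefully are the positivity of $\Lambda_t$ (so the summands are finite) and the monotonicity of $\{\Lambda_t\}$ (which drives the denominator bound).
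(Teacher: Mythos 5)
Your proof is correct and is essentially the paper's own argument: both rationalize $\sqrt{\Lambda_t}-\sqrt{\Lambda_{t-1}}=(\Lambda_t-\Lambda_{t-1})/(\sqrt{\Lambda_t}+\sqrt{\Lambda_{t-1}})$, bound the denominator by $2\sqrt{\Lambda_t}$ via monotonicity, and telescope using $\Lambda_0=0$. The only cosmetic difference is direction of presentation (per-term bound then summation, versus the paper starting from the telescoped $\sqrt{\Lambda_T}$), plus your explicit remark that $\Lambda_t>0$ since $\bar\sigma_t\ge\alpha>0$.
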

\begin{proof}
We note that
\begin{align*}
\sqrt{\Lambda_T}&=\sum_{t=1}^T(\sqrt{\Lambda_t}-\sqrt{\Lambda_{t-1}})=\sum_{t=1}^T\frac{\Lambda_t-\Lambda_{t-1}}{\sqrt{\Lambda_t}+\sqrt{\Lambda_{t-1}}}\ge\sum_{t=1}^T\frac{\bar\sigma_t^2}{2\sqrt{\Lambda_t}},
\end{align*}
where the first equality holds due to the telescope sum and because $\Lambda_0=0$, and the inequality holds because $\Lambda_{t-1}\le\Lambda_t$ and $\bar\sigma_t^2=\Lambda_t-\Lambda_{t-1}$.
\end{proof}

\end{document}